\theoremstyle{plain}
\newtheorem{theorem}{Theorem}[section]
\newtheorem{proposition}[theorem]{Proposition}
\newtheorem{lemma}[theorem]{Lemma}
\newtheorem{corollary}[theorem]{Corollary}
\theoremstyle{definition}
\newtheorem{definition}[theorem]{Definition}
\theoremstyle{remark}
\long\def\comment#1{}
\newfont{\bbb}{msbm10 scaled 700}
\newfont{\bb}{msbm10 scaled 1100}
\newcommand{\EE}{\mathbb{E}}
\newcommand{\RR}{\mathbb{R}}
\newcommand{\uv}{{\bf u}}
\newcommand{\vv}{{\bf v}}
\newcommand{\Xm}{{\bf X}}
\newcommand{\Zm}{{\bf Z}}
\newcommand{\Bc}{\mathcal{B}}
\newcommand{\Cc}{\mathcal{C}}
\newcommand{\Dc}{\mathcal{D}}
\newcommand{\Fc}{\mathcal{F}}
\newcommand{\Lc}{\mathcal{L}}
\newcommand{\Pc}{\mathcal{P}}
\newcommand{\Sc}{\mathcal{S}}
\newcommand{\Xc}{\mathcal{X}}
\newcommand{\Zc}{\mathcal{Z}}
\newcommand{\cs}{{\boldsymbol c}}
\newcommand{\us}{{\boldsymbol u}}
\newcommand{\vs}{{\boldsymbol v}}
\newcommand{\xs}{{\boldsymbol x}}
\newcommand{\zs}{{\boldsymbol z}}
\renewcommand{\arg}{{\hbox{arg}}}
\title{Generalizing Supervised Contrastive learning: A Projection Perspective}
\author{%
  Minoh~Jeong \\
  University of Michigan\\
  \texttt{minohj@umich.edu} \\
 \And
 Alfred Hero \\
 University of Michigan \\
 \texttt{hero@eecs.umich.edu} \\
}
\begin{document}

\maketitle

\begin{abstract}
Self-supervised contrastive learning (SSCL) has emerged as a powerful paradigm for representation learning and has been studied from multiple perspectives, including mutual information and geometric viewpoints. However, supervised contrastive (SupCon) approaches have received comparatively little attention in this context: for instance, while InfoNCE used in SSCL is known to form a lower bound on mutual information (MI), the relationship between SupCon and MI remains unexplored.
To address this gap, we introduce ProjNCE, a generalization of the InfoNCE loss that unifies supervised and self-supervised contrastive objectives by incorporating projection functions and an adjustment term for negative pairs. We prove that ProjNCE constitutes a valid MI bound and affords greater flexibility in selecting projection strategies for class embeddings. Building on this flexibility, we further explore the centroid-based class embeddings in SupCon by exploring a variety of projection methods.
Extensive experiments on image and audio datasets demonstrate that ProjNCE consistently outperforms both SupCon and standard cross-entropy training. Our work thus refines SupCon along two complementary perspectives—information-theoretic and projection viewpoints—and offers broadly applicable improvements whenever SupCon serves as the foundational contrastive objective.
\end{abstract}

\section{Introduction}\label{sec:intro}

Contrastive learning~\citep{chen2020simple,tian2020makes} has recently emerged as a highly effective paradigm for representation learning, achieving strong performance across diverse domains such as computer vision~\citep{wang2021dense}, bioinformatics~\citep{yu2023enzyme}, and multimodal processing~\citep{radford2021learning,girdhar2023imagebind}. In the self-supervised setting, contrastive losses~\citep{oord2018representation,chen2020simple,chuang2020debiased,barbano2022unbiased,chopra2005learning,hermans2017defense} treat each data instance as its own “class” and aim to pull together multiple augmentations of the same instance while pushing apart different instances, attempting to learn representations that capture meaningful invariances. Supervised contrastive (SupCon) learning~\citep{khosla2020supervised} extends this idea to labeled data by grouping examples of the same class as positive pairs, often surpassing the performance of standard cross-entropy (CE) in downstream classification tasks.

Despite these successes, SupCon has not been clearly related to mutual information, whereas the InfoNCE is known to provide a lower bound on mutual information~\citep{poole2019variational}, implying that it maximizes a mutual information lower bound through minimizing the InfoNCE. This discrepancy raises a natural question: {\em How is the SupCon loss relevant to the mutual information $I(\Xm;C)$ between input features and class labels?} This question is fundamental in understanding the~SupCon.

Moreover, from the alignment and uniformity perspective~\citep{wang2020understanding}, SupCon forces alignment using the positive pair as a centroid of embeddings of the same class. 
Similarly, a natural question arises: {\em Is the centroid-based class embedding the best we can select?}
Since the class embedding is a basis of every SupCon-based methods, potential refinement of it benefits broad SupCon-based methods and applications. 

In this work, to relate SupCon to mutual information, we propose ProjNCE, a generalization of InfoNCE based on a projection perspective. By introducing an additional adjustment term for negative pairs, ProjNCE becomes a valid lower bound on mutual information. We show that SupCon can be (approximately) viewed as a special case of ProjNCE, though SupCon itself does not necessarily preserve the mutual-information bound.

Regarding the question of ideal class embedding, we explore projection functions in ProjNCE to replace rigid centroid-based class embeddings. In particular, we study orthogonal projection that is optimal in $\ell_2$ projection error, median based projection that is robust to outlier, and MLP-based projection that does not require hyper-parameter tuning. For the orthogonal projection, we approximate it using soft labels, i.e., posterior class probabilities given the features. When such soft labels are not available, we estimate them via a kernel-based estimator, specifically the Nadaraya-Watson estimator~\citep{nadaraya1964estimating,watson1964smooth}. 
We validate our proposed methods through extensive experiments on vision and audio datasets. The results demonstrate that ProjNCE with various projections outperform baselines in most settings. 

In summary, our contributions are threefold:
\begin{enumerate}
	\item We {\em generalize contrastive loss} to unify supervised and self‐supervised contrastive learning under projection perspective. The generalized loss, namely ProjNCE, is flexible to choose arbitrary projection function.
	\item We theoretically analyze SupCon and ProjNCE under information-theoretic viewpoint and {\em show that minimizing ProjNCE can maximize mutual information.}
	\item We explore standout projection functions of ProjNCE and observe performance gain, {\em opening a new avenue for enhancing contrastive objectives.}
	\item We conduct comprehensive experiments on multiple vision and audio datasets, showing that {\em ProjNCE with proper projection function consistently outperforms baselines}, positioning ProjNCE as a promising foundation for future supervised contrastive learning approaches.
\end{enumerate}

\section{Supervised contrastive learning}
In this section, we first introduce our notation and review the SupCon and InfoNCE losses. We then pose two key questions—from an information‐theoretic perspective and from the lens of alignment and uniformity. Building on these viewpoints, we propose a variant of the InfoNCE loss that generalizes SupCon, thereby linking SupCon to mutual information. Finally, we analyze SupCon under this unified loss framework.

\subsection{Preliminaries and main questions}
For an $M$-class classification problem, let $(\xs,{\cs})\sim p(\xs,\cs)$ be an input feature and the corresponding label pair. We denote by $\Bc = \{(\xs_i,\cs_i)\}_{i=1}^N$ a mini-batch of size $N$ containing randomly sampled pairs of feature and label from a dataset. 
SupCon loss~\citep{khosla2020supervised} is defined as
\begin{align}\label{eq:SupCL_loss}
	I_{\rm NCE}^{\rm sup}
	& = - \EE \left[  \frac{1}{|\Pc(i)|} \sum_{p\in \Pc(i)} \log \frac{\exp(\zs_i \cdot \zs_p / \tau)}{\sum_{j \in \Bc\setminus \{i\}} \exp(\zs_i \cdot \zs_j / \tau) } \right],
\end{align}
where the expectation is taken with respect to $\prod_{j=1}^N p(\xs_j,\cs_j)$, $\zs_i = f(\xs_i)$ is the embedding of $\xs_i$ by an encoder $f: \Xc \to \Sc^{d_z-1}$, $\zs_i\cdot\zs_j$ denotes the inner (dot) product, and $\Pc(i) = \{ k \in[N] : {\cs}_k = {\cs}_i, \xs_k\in \Bc \}$ is the set of indices of features associated with class ${\cs}_i$ in the mini-batch.

The SupCon loss~\eqref{eq:SupCL_loss} is a straightforward modification of the InfoNCE loss~\citep{oord2018representation} for supervised learning. While the InfoNCE loss is a well-established estimator of mutual information~\citep{poole2019variational}, the relationship between SupCon loss and mutual information has not yet been explicitly established. This raises the natural question (Q1) concerning the relation between the SupCon and preserving the class-information contained in the features: 
\begin{itemize}
	\item[Q1:] {\em How is the SupCon loss relevant to the mutual information $I(\Xm; C)$ between input features and class labels?}
\end{itemize} 
In Section~\ref{subsec:ProjNCE} we address this question by analyzing the SupCon loss from an information-theoretic perspective and derive a variant of the SupCon loss inspired by the multi-sample lower bound of mutual information~\citep{nguyen2010estimating, poole2019variational}.

Complementing to the mutual information perspective, the concepts of alignment and uniformity offer valuable interpretations of contrastive loss~\citep{wang2020understanding}. The $i$-th SupCon loss is:
\begin{align}
	I_{{\rm NCE}, i}^{\rm sup}
	& = - \frac{\zs_i}{ \tau} \cdot \sum_{p\in \Pc(i)}  \frac{\zs_p}{|\Pc(i)|}  + \log \sum_{j\in \Bc\setminus \{i\}}  e^{\zs_i \cdot \zs_j / \tau} ,
\end{align}
where the first term corresponds to the alignment, and the second term expresses the uniformity. Compared with the $i$-th InfoNCE loss in self-supervised learning,
\begin{align}
	I_{{\rm NCE},i}^{\rm self} 
	& = - \frac{\zs_i \cdot \zs_p}{ \tau} + \log \sum_{j\in \Bc\setminus \{i\}}  e^{\zs_i \cdot \zs_j / \tau} ,
\end{align}
the difference in the SupCon loss arises from the alignment term, which uses the average embedding of $\zs_p,~p\in\Pc(i)$---referred to as the centroid of $\cs_i$-class embeddings---instead of a single embedding~$\zs_p$.

Intuitively, aligning $\zs_i$ with the centroid helps cluster embeddings within the same class when the labels are reliable (e.g., noiseless) and the features $\xs_p, p\in\Pc(i)$ are equally informative. 
However, when the features $\xs_p, p\in\Pc(i)$ are not equally informative for the class or labels are not reliable, aligning the embedding with the centroid becomes problematic. For instance, in the case of noisy labels, if a majority of the labels are randomly permuted across class categories, $\Pc(i)$ becomes unreliable, and the centroid may fail to capture the correct class representation.
Furthermore, if the dependency between $\xs_p$ and $\cs_p$ varies across $p\in\Pc(i)$, the uniform weighting used in the centroid fails to account for these disparities in information. 

Motivated by these observations, our second question is: 
\begin{itemize}
	\item[Q2:] {\em Is the centroid-based class embedding the best we can select?}
\end{itemize}

In Section~\ref{sec:projection}, we investigate various approaches for class embedding through the lens of projection.


\subsection{ProjNCE: A valid mutual information lower bound}\label{subsec:ProjNCE}
We relate SupCon loss to the mutual information between the embedding and the class, denoted $I(\Zm; C)$.
We begin with the InfoNCE loss~\citep{oord2018representation}:
\begin{align}\label{eq:InfoNCE}
	I_{\rm NCE}^{\rm self}(\Zm; C ) 
	& = \frac{1}{N} \sum_{i=1}^N \EE_{P } \left[  -\log \frac{ e^{ \psi( f(\xs_i), \cs_i)}}{ \sum_{j=1}^N e^{ \psi( f(\xs_i) , \cs_j)} } \right],
\end{align}
where $P=p(\xs_i|\cs_i)\prod_{j=1}^N p(\cs_j) $. Here, $\psi(x,y)$ is a critic that measures similarity between $x$ and $y$. 

A direct similarity comparison between the input feature $\xs\in \Xc$ and the class $\cs\in [1:M]$ is infeasible unless both are projected to the same space. 
Typically, input pairs $(\xs,\cs)$ are projected to $\RR^{d_z}$ using encoders, followed by normalization and the evaluation of cosine similarity.

To generalize this approach and relate it to $I(\Zm;C)$, we define projection functions $g_{+}(\cs)$ and $g_{-}(\cs)$ that map $\cs$ into $\RR^{d_z}$. Substituting these projection functions into~\eqref{eq:InfoNCE} results in:
\begin{align}\label{eq:NCE_proj}
	I_{\rm NCE}^{\rm self\text{-}p}(\Zm;C)
	& =  \frac{1}{N} \sum_{i=1}^N \EE_{P} \left[ -\log \frac{ e^{ \psi( f(\xs_i), g_{+}(\cs_i))}}{ \sum_{j=1}^N e^{ \psi( f(\xs_i) , g_{-}(\cs_j))} } \right].
\end{align}
Unlike~\eqref{eq:InfoNCE}, the projection-based InfoNCE in~\eqref{eq:NCE_proj} allows for different projections for positive ($g_{+}(\cs)$) and negative ($g_{-}(\cs)$) samples, leading to the following properties:
\begin{itemize}
	\item \eqref{eq:NCE_proj} generalizes both SupCon loss in supervised settings and InfoNCE loss in self-supervised settings. For instance: 1) setting $g_+(\cs_i) = \sum_{p\in \Pc(i)} \frac{f(\xs_p)}{|\Pc(i)|}$ and $g_-(\cs_j) = f(\zs_j),~\forall j\neq i$, recovers the SupCon loss; 2) setting $g_+ = g_-$ retrieves the InfoNCE loss.

	\item The choice of $g_+$ and $g_-$ introduces inductive biases, allowing flexibility in designing contrastive learning objectives tailored to specific tasks.
\end{itemize}

\cite{poole2019variational} proved that the InfoNCE loss provides a lower bound on mutual information by leveraging a multi-sample variant of the standard lower bound~\citep{nguyen2010estimating}. In a similar manner, we derive lower bounds on $I(\Xm;C)$ for the projection-based InfoNCE loss defined in~\eqref{eq:NCE_proj} and for the SupCon loss:

\begin{proposition}\label{prop:ProjNCE_bound}
For any $g_+$ and $g_-$, the projection incorporated InfoNCE in~\eqref{eq:NCE_proj} bounds mutual information as
\begin{align}\label{eq:MI_bound_ProjNCE}
	I(\Xm;C)
	& \geq 1+ \log N -  I_{\rm NCE}^{\rm self\text{-}p}(\Xm; C) - R (\Xm, C) ,
\end{align}
where 
\begin{align}\label{eq:R}
	R (\Xm, C)
	& = \EE_{p(\xs) \prod_{j=1}^N p(\cs_j)} \!\! \left[ \frac{\sum_{k=1}^N  e^{\psi(f(\xs), g_+(\cs_k))}}{\sum_{k=1}^N e^{\psi(f(\xs), g_-(\cs_k))}} \right]\!\!.
\end{align}
\end{proposition}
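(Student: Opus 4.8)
The plan is to obtain the bound from a single application of the Nguyen--Wainwright--Jordan (NWJ) variational inequality \cite{nguyen2010estimating}, specialized through the multi-sample critic construction of \cite{poole2019variational}. Recall that for any two probability measures $P, Q$ with $P$ absolutely continuous with respect to $Q$, and any measurable critic $T$,
\begin{align}
	D_{\rm KL}(P\,\|\,Q) \geq \EE_P[T] - \EE_Q\!\left[e^{T-1}\right],
\end{align}
which follows from the pointwise bound $rt - e^{t-1} \leq r\log r$ (the left side is maximized at $t = 1 + \log r$) applied with $r = dP/dQ$. The strategy is to choose $P$ and $Q$ so that $D_{\rm KL}(P\,\|\,Q) = I(\Xm;C)$ and a critic $T$ for which the two terms on the right reproduce exactly $1 + \log N - I_{\rm NCE}^{\rm self\text{-}p}$ and $R(\Xm, C)$.

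Concretely, I would fix a representative index $i$ and adjoin the negatives $\cs_j$, $j \neq i$, drawn independently from $p(\cs_j)$. Since these auxiliary variables are independent of $(\xs, \cs_i)$, appending them leaves the divergence unchanged, so that $I(\Xm;C) = D_{\rm KL}\big(p(\xs|\cs_i)\prod_{j} p(\cs_j) \,\big\|\, p(\xs)\prod_{j} p(\cs_j)\big)$. The first measure is exactly the measure $P$ under which $I_{\rm NCE}^{\rm self\text{-}p}$ is defined in \eqref{eq:NCE_proj}, and the second is the product measure appearing in $R$ in \eqref{eq:R}. I would then take the critic
\begin{align}
	T(\xs, \cs_{1:N}) = 1 + \log \frac{e^{\psi(f(\xs), g_+(\cs_i))}}{\tfrac{1}{N}\sum_{k=1}^N e^{\psi(f(\xs), g_-(\cs_k))}} .
\end{align}

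The computation then splits cleanly into the two NWJ terms. For $\EE_P[T]$, I would pull out the constant $1$, split the logarithm to extract $\log N$ from the $\tfrac{1}{N}$ normalization, and recognize the remaining expectation as $-I_{\rm NCE}^{\rm self\text{-}p}$ (here using that all indices are identically distributed, so the single-index expectation equals the index-averaged loss of \eqref{eq:NCE_proj}); this gives $\EE_P[T] = 1 + \log N - I_{\rm NCE}^{\rm self\text{-}p}$. For the penalty $\EE_Q[e^{T-1}]$, substituting $T$ collapses $e^{T-1}$ to $N\,e^{\psi(f(\xs), g_+(\cs_i))}\big/\sum_k e^{\psi(f(\xs), g_-(\cs_k))}$, and exchangeability of the i.i.d. negatives under $Q = p(\xs)\prod_{j} p(\cs_j)$ lets me replace $N$ times the $i$-th numerator by the full sum $\sum_k e^{\psi(f(\xs), g_+(\cs_k))}$, which is precisely $R(\Xm, C)$. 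Subtracting the two terms yields the claimed inequality.

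The main difficulty is bookkeeping rather than any deep estimate: I must check that the loss expectation measure coincides with the NWJ joint measure $P$ once the auxiliary negatives are adjoined, and that the product measure in $R$ is the NWJ marginal $Q$. The $g_+ = g_-$ special case, which should force $R \equiv 1$ and recover the standard InfoNCE bound $I(\Xm;C) \geq \log N - I_{\rm NCE}^{\rm self}$, is a useful guard against sign and normalization slips. The one genuinely nontrivial step is the symmetrization turning $N\,\EE_Q[e^{\psi(f(\xs), g_+(\cs_i))}/\sum_k e^{\psi(f(\xs), g_-(\cs_k))}]$ into the symmetric ratio defining $R$; this relies on $Q$ being exchangeable in $\cs_1, \ldots, \cs_N$, which holds because they are i.i.d. from the class marginal, and it is exactly where the asymmetry between $g_+$ in the numerator and $g_-$ in the denominator must be tracked carefully.
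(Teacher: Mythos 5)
Your proposal is correct and follows essentially the same route as the paper: both instantiate the NWJ variational bound (the paper quotes its multi-sample form from~\cite{poole2019variational}) with the critic $\psi(f(\xs), g_+(\cs_i))$ normalized by $a = \frac{1}{N}\sum_{j=1}^N e^{\psi(f(\xs), g_-(\cs_j))}$, which yields the $1+\log N - I_{\rm NCE}^{\rm self\text{-}p}$ term and identifies the penalty $\EE_Q[e^{T-1}]$ as $R(\Xm,C)$. The only cosmetic difference is that the paper symmetrizes by averaging the bound over the index $i$, whereas you fix a single index and invoke exchangeability of the i.i.d. negatives---the same computation in different bookkeeping.
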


\begin{proof}
The proof is in Appendix~\ref{app:proof_ProjNCE_bound}
\end{proof}

\begin{corollary}\label{cor:SupCon_bound}
SupCon loss bounds mutual information as
\begin{align}\label{eq:MI_bound_SupCon}
	I(\Xm;C)
	& \geq 1+ \log N -  I_{\rm NCE}^{\rm sup}(\Xm; C) -\! R^{\rm sup}(\Xm, C) ,
\end{align}
where $R^{\rm sup} (\Xm, C) = \EE_{p(\xs) \prod_{j=1}^N p(\cs_j)} \!\! \left[ \frac{\sum_{k=1}^N  e^{\psi(f(\xs), \sum_{p\in\Pc(k)} \frac{f(\xs_p)}{|\Pc(k)|} )}}{\sum_{k=1}^N e^{\psi(f(\xs), f(\xs_k))}} \right].$
\end{corollary}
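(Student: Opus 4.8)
The plan is to obtain Corollary~\ref{cor:SupCon_bound} as a direct specialization of Proposition~\ref{prop:ProjNCE_bound}, since the latter holds for \emph{any} projection functions $g_+$ and $g_-$. First I would fix the critic to the scaled inner product $\psi(x,y) = x\cdot y/\tau$ together with $\ell_2$-normalized embeddings $\zs_i = f(\xs_i)\in\Sc^{d_z-1}$, and then instantiate the two projections by the choices the excerpt already identifies as recovering SupCon: the centroid $g_+(\cs_k) = \sum_{p\in\Pc(k)} f(\xs_p)/|\Pc(k)|$ for the positive branch and the per-sample encoder output $g_-(\cs_k) = f(\xs_k) = \zs_k$ for the negative branch. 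With these substitutions in hand, the entire claim reduces to verifying that the two data-dependent quantities appearing in~\eqref{eq:MI_bound_ProjNCE}, namely $I_{\rm NCE}^{\rm self\text{-}p}$ and $R$, collapse to $I_{\rm NCE}^{\rm sup}$ and $R^{\rm sup}$ respectively.

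For the loss term, the key observation is the linearity of the inner product in the positive projection. Evaluating the numerator exponent gives $\psi(f(\xs_i),g_+(\cs_i)) = f(\xs_i)\cdot g_+(\cs_i)/\tau = \frac{1}{|\Pc(i)|}\sum_{p\in\Pc(i)} \zs_i\cdot\zs_p/\tau$, which is precisely the averaged alignment term defining the SupCon loss in~\eqref{eq:SupCL_loss}; equivalently, the logarithm of the single averaged exponential equals the average of the per-positive log-terms, because $\log\circ\exp$ is the identity and the exponent is affine in the $\zs_p$. Substituting $g_-(\cs_k)=\zs_k$ turns the denominator of~\eqref{eq:NCE_proj} into $\sum_{j} e^{\zs_i\cdot\zs_j/\tau}$, i.e.\ the SupCon partition function. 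The residual term requires no further work: inserting the same two projections into the definition~\eqref{eq:R} of $R(\Xm,C)$ reproduces verbatim the expression for $R^{\rm sup}(\Xm,C)$ stated in the corollary, so~\eqref{eq:MI_bound_SupCon} follows from~\eqref{eq:MI_bound_ProjNCE}.

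The main obstacle—and the reason the reduction is only approximate, as flagged in the introduction—is the treatment of the self-term in the denominator. The projection-based loss~\eqref{eq:NCE_proj} sums over all $j=1,\dots,N$, thereby including the anchor index $j=i$ with contribution $e^{\zs_i\cdot\zs_i/\tau}=e^{1/\tau}$, whereas the standard SupCon partition function~\eqref{eq:SupCL_loss} sums over $\Bc\setminus\{i\}$ and omits it. I would reconcile the two either by adopting a matched convention (excluding the anchor in both, which leaves the proof of Proposition~\ref{prop:ProjNCE_bound} intact) or by absorbing the discrepancy into the bound: since $\zs_i\cdot\zs_j\ge -1$ forces $\sum_{j\ne i} e^{\zs_i\cdot\zs_j/\tau}=\Omega(N)$, the omitted constant $e^{1/\tau}$ perturbs each log-partition by only $\log(1+O(1/N))=O(1/N)$ and is asymptotically negligible in batch size. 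A secondary point to check is that the sampling distribution $P = p(\xs_i\mid\cs_i)\prod_j p(\cs_j)$ used in~\eqref{eq:NCE_proj} is consistent with the expectation over $\prod_j p(\xs_j,\cs_j)$ in~\eqref{eq:SupCL_loss}; this is a matter of rewriting the joint expectation and does not affect the inequality itself.
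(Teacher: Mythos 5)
Your proposal follows exactly the paper's route: the paper's entire proof is the one-line substitution of $g_+(\cs_k)=\sum_{p\in\Pc(k)} f(\xs_p)/|\Pc(k)|$ and $g_-(\cs_k)=f(\xs_k)$ into Proposition~\ref{prop:ProjNCE_bound}, which is precisely the content of your first two paragraphs. Your third paragraph on the anchor term $j=i$ is extra diligence the paper omits entirely—the paper implicitly takes $I_{\rm NCE}^{\rm sup}(\Xm;C)$ in the corollary to \emph{be} the substituted quantity (denominator over all $j$), silently eliding the mismatch with the definition in~\eqref{eq:SupCL_loss} that you correctly identify as the reason the reduction to SupCon is only ``approximate.''
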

\begin{proof}
Setting $g_+(\cs_k) = \sum_{p\in\Pc(k)} \frac{f(\xs_p)}{|\Pc(k)|}$ and $g_-(\cs_k) = f(\xs_k)$ in $ I_{\rm NCE}^{\rm self\text{-}p}(\Xm; C)$ and $R (\Xm, C)$ in Proposition~\ref{prop:ProjNCE_bound} gives the bound, which completes the proof.
\end{proof}

Proposition~\ref{prop:ProjNCE_bound} establishes the connection between~\eqref{eq:NCE_proj} and mutual information through the derived lower bound. Due to the presence of the adjustment term $R(\Xm,C)$, the SupCon loss, when derived by selecting the corresponding projection functions in~\eqref{eq:NCE_proj} (as demonstrated in Corollary~\ref{cor:SupCon_bound}), might not serve as a strict lower bound for mutual information. Consequently, minimizing the SupCon loss does not necessarily guarantee an increase in mutual information.

To address this limitation, we incorporate the adjustment term $R(\Xm,C)$ into~\eqref{eq:NCE_proj}, thereby ensuring that it becomes a proper lower bound, as validated by Proposition~\ref{prop:ProjNCE_bound}. This leads us to propose a novel variant of the InfoNCE loss, referred to as the ProjNCE loss:
\begin{definition}[ProjNCE]\label{def:ProjNCE}
The ProjNCE loss is defined as
\begin{align}\label{eq:MIProjNCE}
	I_{\rm proj} (\Zm; C)
	& = I_{\rm NCE}^{\rm self\text{-}p}(\Zm;C) + R(\Zm, C),
\end{align}
where $I_{\rm NCE}^{\rm self\text{-}p}(\Zm;C)$ is in~\eqref{eq:NCE_proj}, and $R(\Xm, C)$ is in~\eqref{eq:R}. 
\end{definition}
Since the negative of the ProjNCE loss provides a valid lower bound for mutual information (Proposition~\ref{prop:ProjNCE_bound}), mutual information can be effectively maximized by minimizing the ProjNCE loss.

While $I_{\rm NCE}^{\rm self\text{-}p}(\Zm;C)$ can be computed analogously to the standard InfoNCE loss, evaluating $R(\Xm,C)$ requires the mutual independence of $\xs$ and $\cs_j,~j\in[N]$. In practice, we approximate $R(\Xm,C)$  via a leave-one-out scheme: for each sample, we exclude its true class and use only independently drawn $\cs_j$. Throughout this paper, we estimate $R(\Xm,C)$ from a mini-batch using the following approximation:
\begin{align}
	R(\Xm,C)
	& \approx \frac{1}{N} \sum_{i=1}^N \EE_{P} \left[ \frac{\sum_{k\neq i}  e^{\psi(f(\xs_i), g_+(\cs_k))}}{\sum_{k\neq i} e^{\psi(f(\xs_i), g_-(\cs_k))}} \right],
\end{align}
where $P$ is the same distribution used in~\eqref{eq:InfoNCE} and~\eqref{eq:NCE_proj}.
In what follows, although ProjNCE~\eqref{eq:MIProjNCE} can employ arbitrary projection functions $g_+$ and $g_-$, we will use the term “ProjNCE” to denote the special case in which these projections coincide with those used in SupCon.

\subsection{The adjustment term $R(\Xm;C)$}

From the perspective of alignment and uniformity~\citep{wang2020understanding}, we analyze the $i$-th term of the ProjNCE loss in Definition~\ref{def:ProjNCE}, which can be expressed as: 
\begin{align}\label{eq:MIProjNCE_decompose}
	\!I_{\rm proj,i}
	& = \EE_{P} \!\!\left[ - \psi( f(\xs_i), g_{+}(\cs_i)) + \log \sum_{j=1}^N e^{\psi( f(\xs_i) , g_{-}(\cs_j))} \!\right]\! + \EE_{Q} \!\!\left[ \frac{\sum_{k=1}^N  e^{\psi(f(\xs), g_+(\cs_k))}}{\sum_{k=1}^N e^{\psi(f(\xs), g_-(\cs_k))}} \right]\!.
\end{align}
where $P= p(\xs_i|\cs_i)\prod_{j=1}^N p(\cs_j)$ and $Q = p(\xs) \prod_{j=1}^N p(\cs_j)$. Throughout the paper, we occasionally denote the $P$ and $Q$ distributions for brevity and clarity in notation.

In addition to the alignment term $ \EE_P[- \psi( f(\xs_i), g_{+}(\cs_i))]$ and the uniformity term $\EE_P[ \log \sum_{j=1}^N e^{ \psi( f(\xs_i) , g_{-}(\cs_j))} ]$ of InfoNCE,~\eqref{eq:MIProjNCE_decompose} introduces a third term, which we call adjustment term, defined as $R(\Xm,C) = \EE_{Q} \left[ \frac{\sum_{k=1}^N  e^{\psi(f(\xs), g_+(\cs_k))}}{\sum_{k=1}^N e^{\psi(f(\xs), g_-(\cs_k))}} \right]$. This term represents the average ratio of the similarity between $g_+$ and $g_-$. Since the distribution $Q$ independently samples $\xs$ and $\cs_j$, the adjustment term measures the similarity ratio for negative pairs (i.e., pairs without shared semantics).

Intuitively, minimizing the adjustment term encourages $g_-(\cs_k)$ closer to $f(\xs)$ than $g_+(\cs_k)$ for negative pairs. This reduces the perturbation caused by false positive samples, as $g_+$ becomes more robust by tending to produce smaller similarity scores for such negative pairs.

\subsection{Analysis of SupCon from ProjNCE}

We now set $g_+(\cs_i) = \frac{1}{|\Pc(i)|} \sum_{p\in\Pc(i)} f(\xs_p)$ and $g_-(\cs_i) = \xs_i$ to analyze the SupCon loss in the context of ProjNCE loss. As demonstrated earlier, incorporating the adjustment term $R(\Xm,C)$ into SupCon establishes a valid lower bound for mutual information, a linkage that can be exploited to enhance classification performance.
To validate our analysis, we perform classification tasks and compare the visualization of the embeddings learned by SupCon and ProjNCE.

\begin{figure*}
    \centering
    \subfigure[SupCon]{\includegraphics[width=0.23\textwidth]{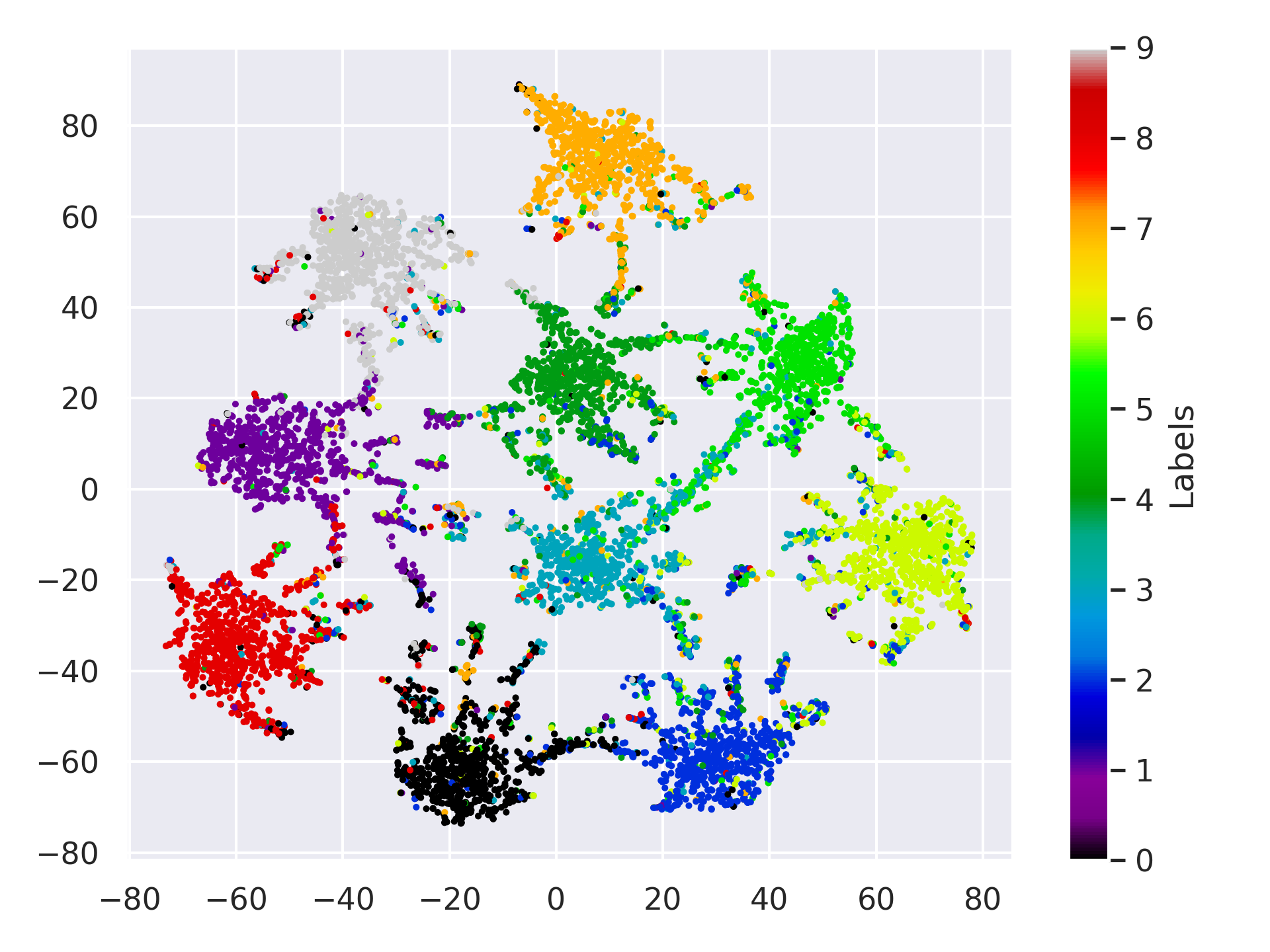} \label{subfig:t_sne_w0}}  
    \subfigure[ProjNCE]{\includegraphics[width=0.23\textwidth]{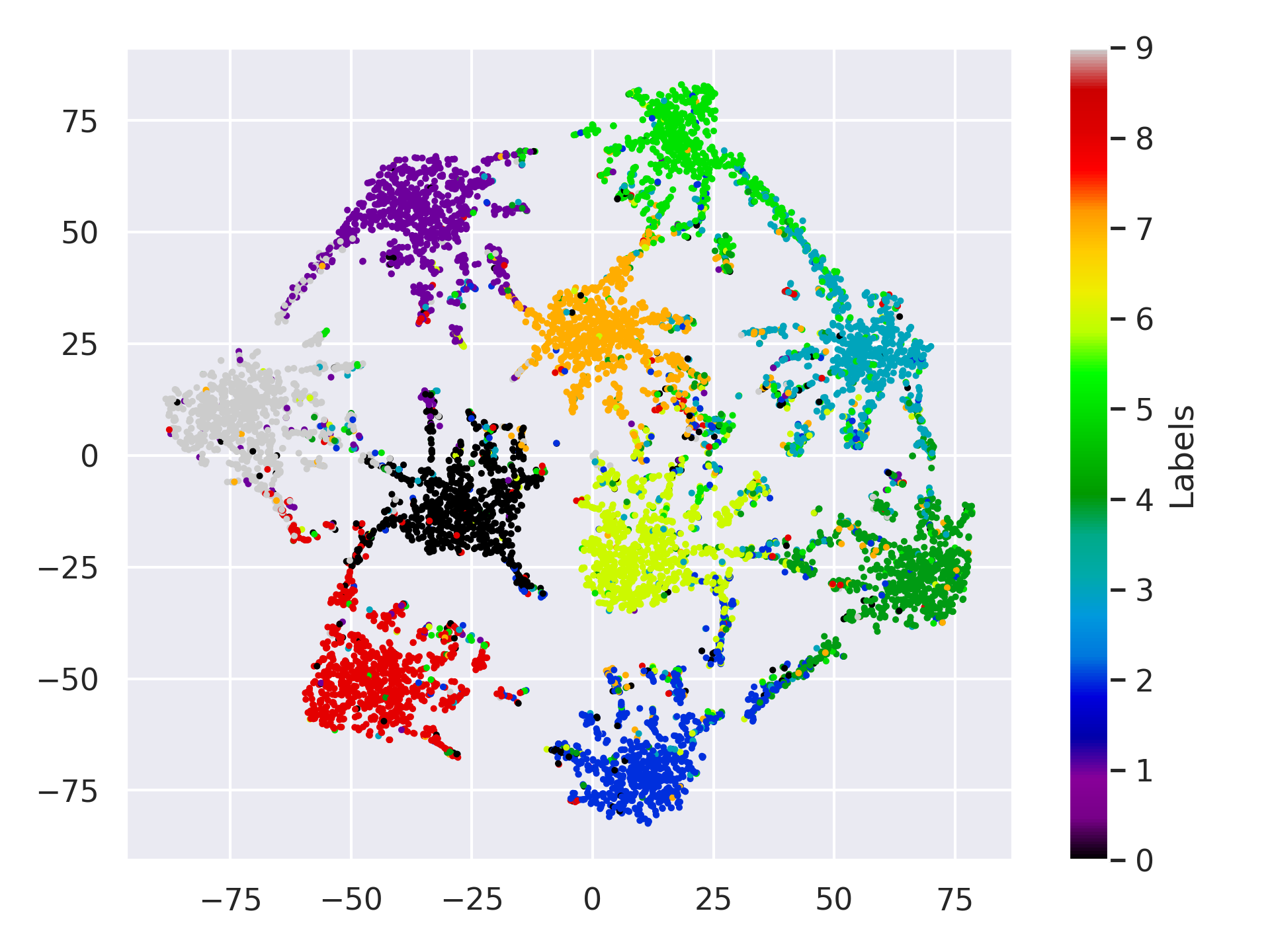}\label{subfig:t_sne_w1} } 
    \subfigure[\eqref{eq:ProjNCE_beta} with $\beta=5$]{\includegraphics[width=0.23\textwidth]{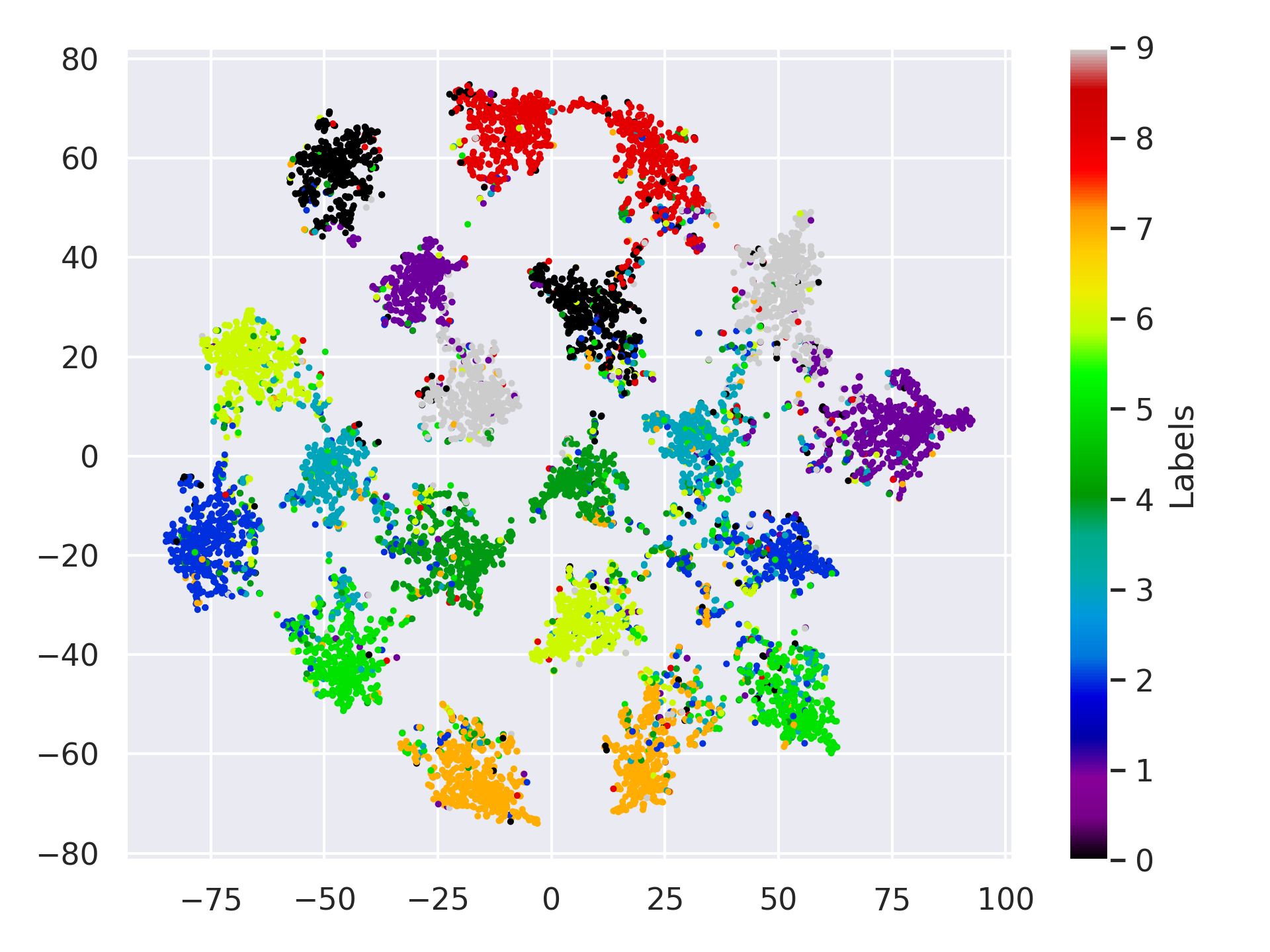} \label{subfig:t_sne_w5} }
    \subfigure[\eqref{eq:ProjNCE_beta} with $\beta=10$]{\includegraphics[width=0.23\textwidth]{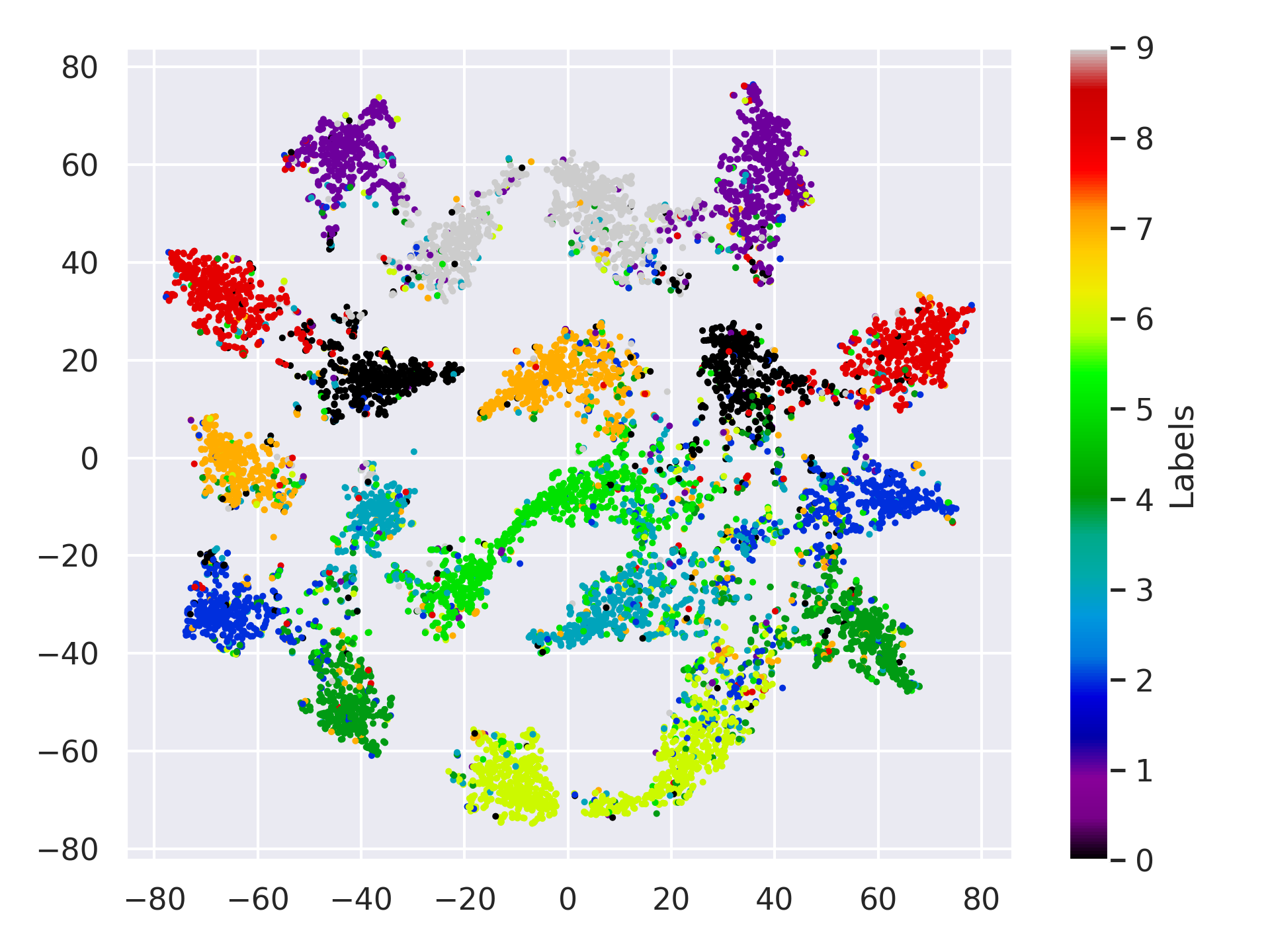}\label{subfig:t_sne_w10} }
    \caption{t-SNE plots of CIFAR-10 embeddings from Resnet-18 learned with label noise of probability $0.3$. The four figures are obtained from different loss selection: (a) SupCon (b) ProjNCE (c) \eqref{eq:ProjNCE_beta} with $\beta=5$ (d) \eqref{eq:ProjNCE_beta} with $\beta=10$.  The adjustment term in~\eqref{eq:R} forces the embedding clusters to spread out.}
    \label{fig:ProjNCE_t_sne_weight}
\end{figure*}

\begin{wraptable}{r}{0.4\textwidth}
\caption{Top-1 classification accuracies with ResNet-18. $p$ indicates the probability of label noise. Coarse indicates fine-grained classification accuracy.}
\label{tab:SupCon_ProjNCE}
\vspace{-0.5em}
\begin{center}
\begin{tiny}
\begin{sc}
\begin{tabular}{lcccr}
\toprule
Dataset  & SupCon & ProjNCE &   \\
\midrule
CIFAR-10    & 93.56 & \textbf{94.24} \\
CIFAR-10 ($p=0.3$)   & 80.71 & \textbf{81.72} \\
\bottomrule
\end{tabular}
\end{sc}
\end{tiny}
\end{center}
\vspace{-1em}
\end{wraptable}

Table~\ref{tab:SupCon_ProjNCE} reports the classification accuracies on CIFAR-10 and CIFAR-100 using cross-entropy (CE), SupCon, and ProjNCE losses. We adopt ResNet-18~\citep{he2016deep} as the encoder and train under three label regimes: (i) clean labels; (ii) noisy labels, created by randomly permuting class labels with probability 0.3; and (iii) coarse labels, where classes are grouped into “animal” versus “vehicle.” In every epoch during training, we evaluate zero-shot classification accuracy on a held-out test set with clean labels\footnote{We compute class centroids in the embedding space and classify new samples by their similarity to those centroids.}. We report the best accuracy during training encoders. Although the gains in Table~\ref{tab:SupCon_ProjNCE} are modest, ProjNCE consistently outperforms SupCon—presumably because it optimizes a tighter mutual information bound. Crucially, this advantage persists under both noisy and coarse labeling, demonstrating ProjNCE’s robustness. In Section~\ref{sec:experiment} we present additional experiments measuring the mutual information between learned embeddings and class labels, confirming that ProjNCE yields higher mutual information than SupCon.

Figure~\ref{fig:ProjNCE_t_sne_weight} visualizes the t-SNE~\citep{van2008visualizing} embeddings of the CIFAR-10 test dataset using ResNet-18 trained with different loss functions under a label noise probability of $0.3$. In Figures~\ref{subfig:t_sne_w0} and~\ref{subfig:t_sne_w1}, the embeddings learned by SupCon and ProjNCE are displayed, showing similar clustering patterns with respect to class. To observe how the adjustment term in the ProjNCE loss influences embedding clusters, we emphasize the adjustment term by weighting it with $\beta$ of $5$ and $10$, with the resulting embeddings shown in Figures~\ref{subfig:t_sne_w5} and~\ref{subfig:t_sne_w10}, respectively. Specifically, we use the following loss with~$\beta\in\{5,10\}$:
\begin{align}\label{eq:ProjNCE_beta}
	\Lc = I_{\rm NCE}^{\rm self\text{-}p}(\Zm;C) + \beta R(\Zm, C).
\end{align}

Compared to SupCon, embeddings trained with a weight of 5 (Figure~\ref{subfig:t_sne_w5}) exhibit increased dispersion with two clusters for each class, and those trained with a weight of 10 (Figure~\ref{subfig:t_sne_w10}) form more distinct subclusters. We attribute this behavior to improved separation of false-positive samples, which prevents the embeddings from collapsing too tightly around their centroids.

\section{Projection functions in ProjNCE}\label{sec:projection}
In this section, we investigate the projection functions $g_+$ and $g_-$ to further optimize the ProjNCE loss. Specifically, we consider: 1) An orthogonal projection of each class label into the embedding space.; 2) the median of the embeddings corresponding to the same class; and 3) MLP-based projection into the embedding space. 

\subsection{Orthogonal projection}
Intuitively, the projection functions $g_+$ and $g_-$ should yield a semantical representation of $\cs_i$ in the embedding space. From a projection perspective, a natural choice is the orthogonal projection of the class label into that space. According to the orthogonal principle~\citep{kay1993fundamentals}, the conditional expectation
\begin{align}
	\bar{f}(\cs) := \EE[f(\Xm)|C=\cs] \in \arg\min_{g\in \Fc} \EE \left[ \| f(\Xm) - g(\cs)\|^2 \right],
\end{align}
where $\Fc$ is the set of measurable functions. Hence, $\bar{f}(\cs)$ serves as the natural centroid of the embeddings for $\cs$~\citep{banerjee2005clustering}. Using this, we define variants of ProjNCE and SupCon, namely ProjNCE-perp, in Definition~\ref{def:SoftNCE}.

\begin{definition}[ProjNCE-perp]\label{def:SoftNCE}
ProjNCE-perp loss is defined as
\begin{align}\label{eq:SoftNCE}
	I_{\rm proj}^{\rm perp}
	& = \frac{1}{N} \sum_{i=1}^N \EE_{P } \left[ -\log \frac{ e^{ \psi( f(\xs_i), \overline{f}(\cs_i))}}{ \sum_{j=1}^N e^{ \psi( f(\xs_i) , \overline{f}(\cs_j))} } \right],
\end{align}
where $\overline{f}(\cs) =  \EE[f(\Xm) | C = \cs]$.
\end{definition}
Note that ProjNCE-perp~\eqref{eq:SoftNCE} does not include the adjustment term since $g_+=g_-$ results in $R(\Xm,C)=1$. 
The following proposition outlines the basic properties of ProjNCE-perp.

\begin{proposition}\label{prop:SoftNCE_property}
ProjNCE-perp loss~\eqref{eq:SoftNCE} satisfies the following:
\begin{itemize}
	\item Optimal critic $\psi^\star$ that maximizes ProjNCE-perp satisfies $\psi^\star(f(\xs), \overline{f}(\cs)) \propto \log \frac{p(\cs|\xs)}{p(\cs)} + \alpha(\xs),$
	where $\alpha(\xs)$ is an arbitrary function.
	
	\item $I(\Xm;C)\geq  -  I_{\rm proj}^{\rm perp}(\Xm;C)  + \log N$.
	
	\item With an optimal critic, it holds that $\log N - I_{\rm proj}^{\rm perp}(\Xm; C) \overset{a.s.}{\to} I(\Xm; C)$ as $N\to\infty$.
\end{itemize}

\end{proposition}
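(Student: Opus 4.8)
The plan is to treat all three items as consequences of a single observation: SoftNCE in~\eqref{eq:SoftNCE} is exactly the standard InfoNCE objective in the variable pair $(\xs,\cs)$ once we read $\phi(\xs,\cs):=\psi(f(\xs),\overline{f}(\cs))$ as a composite critic, and it is the special case $g_+=g_-=\overline{f}$ of ProjNCE. This lets me import the classical InfoNCE analysis of~\cite{oord2018representation,poole2019variational} while remaining careful about the restricted form of the critic.

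For the optimal-critic claim, I would first note that the $i$-th summand of $-I_{\rm NCE}^{\rm soft}$ is the log-softmax probability that index $i$ is the positive one among the $N$ sampled classes, so minimizing the loss is the problem of identifying which class index is paired with $\xs_i$. The Bayes-optimal posterior over the positive index given $(\xs_i,\cs_1,\dots,\cs_N)$ is proportional (in $k$) to $p(\cs_k\mid\xs_i)/p(\cs_k)$, and a softmax with logits $\phi(\xs_i,\cs_k)$ realizes this posterior precisely when $e^{\phi(\xs_i,\cs_k)}\propto p(\cs_k\mid\xs_i)/p(\cs_k)$; taking the functional derivative of the objective with respect to $\phi$ and setting it to zero yields the same stationarity condition. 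Since scaling all logits by a common $\cs$-independent factor leaves the softmax invariant, the condition reads $\phi^\star(\xs,\cs)=\log\frac{p(\cs\mid\xs)}{p(\cs)}+\alpha(\xs)$ with $\alpha(\xs)$ free, which is exactly $\psi^\star(f(\xs),\overline{f}(\cs))\propto\log\frac{p(\cs\mid\xs)}{p(\cs)}+\alpha(\xs)$. I should stress that this is a necessary condition on any optimizer rather than a realizability claim, which sidesteps whether $\psi$ acting only through $f,\overline{f}$ is expressive enough.

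The lower-bound claim is immediate from Proposition~\ref{prop:ProjNCE_bound}. Setting $g_+=g_-=\overline{f}$ collapses the adjustment term, $R(\Xm,C)=\EE_{Q}[\,\cdot\,]=1$ because numerator and denominator coincide, and $I_{\rm NCE}^{\rm self\text{-}p}$ reduces to $I_{\rm NCE}^{\rm soft}$; substituting into~\eqref{eq:MI_bound_ProjNCE} cancels the additive $1$ and leaves $I(\Xm;C)\ge\log N-I_{\rm NCE}^{\rm soft}(\Xm;C)$.

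For the asymptotic-tightness claim I would substitute the exact log-density-ratio critic (taking $\alpha\equiv 0$, which cancels in the ratio), so each logit becomes $e^{\phi^\star(\xs,\cs_k)}=p(\cs_k\mid\xs)/p(\cs_k)$. Expanding $\log N-I_{\rm NCE}^{\rm soft}$ then splits into the joint-expectation term $\EE_{p(\xs,\cs)}[\log\frac{p(\cs\mid\xs)}{p(\cs)}]=I(\Xm;C)$ and a correction $-\EE[\log\frac1N\sum_{k=1}^N \frac{p(\cs_k\mid\xs)}{p(\cs_k)}]$. Conditioning on $\xs$, the $N-1$ negative terms are i.i.d.\ with mean $\sum_{\cs}p(\cs)\frac{p(\cs\mid\xs)}{p(\cs)}=1$, so the strong law of large numbers gives $\frac1N\sum_k \frac{p(\cs_k\mid\xs)}{p(\cs_k)}\to 1$ almost surely, whence the correction tends to $0$ and the bound converges to $I(\Xm;C)$. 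The main obstacle is the interchange of limit and expectation needed to turn this a.s.\ statement into convergence of the integrated bound; here the finiteness of the $M$-class label set is decisive, since $p(\cs\mid\xs)/p(\cs)\le 1/\min_{\cs}p(\cs)$ is bounded, which simultaneously validates the SLLN hypotheses and supplies a dominating function for a dominated-convergence argument once $N$ is large enough that the empirical average is bounded away from zero.
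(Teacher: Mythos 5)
Your proposal is correct and follows essentially the same route as the paper: the optimal critic is the standard InfoNCE log-density-ratio critic (the paper simply cites~\cite{ma-collins-2018-noise} and notes the intermediate projection does not affect optimality, where you re-derive the stationarity condition); the lower bound comes from the NWJ multi-sample bound---your shortcut of specializing Proposition~\ref{prop:ProjNCE_bound} with $g_+=g_-=\overline{f}$ so that $R(\Xm,C)\equiv 1$ is exactly the paper's expanded derivation, which chooses $a=\frac{1}{N}\sum_j e^{\psi(f(\xs),\overline{f}(\cs_j))}$ and observes the ratio term cancels to $1$; and the asymptotic claim is obtained identically by plugging in the optimal critic and applying the strong law of large numbers to $\frac{1}{N}\sum_{j} p(\cs_j|\xs)/p(\cs_j)\to 1$. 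The only substantive difference is that you explicitly justify the limit--expectation interchange via dominated convergence using the bound $p(\cs|\xs)/p(\cs)\le 1/\min_{\cs}p(\cs)$ from the finite label set (and you separate the single positive term from the $N-1$ i.i.d.\ negatives), a step the paper performs only implicitly; this is a welcome tightening rather than a different argument.
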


\begin{proof}
The proof is in Appendix~\ref{app:proof_SoftNCE_property}
\end{proof}

\paragraph{Estimator of $\overline{f}$.}
Although $\overline{f}$ is the optimal projection in terms of $\ell_2$ projection error, its exact computation requires knowledge of the conditional distribution, which is generally unavailable. To overcome this, we approximate $\overline{f}$ using a kernel regression~\citep{tsybakov2008introduction}. Specifically, with $\zs = f(\xs)$, Bayes rule yields:
\begin{align}\label{eq:cond_exp}
	\overline{f}(\cs)
	= \int \zs \frac{ p({\cs} | \zs) p(\zs) }{ p({\cs}) } {\rm d} \zs 
	& = \frac{\EE_{p(\zs)}\left[ p({\cs} | \zs) \zs  \right]}{\EE_{p(\zs)}\left[ p({\cs}|\zs)  \right]} .
\end{align}
Assuming that $p(\cs|\zs)$ is accessible, $\overline{f}$ can be estimated via Monte Carlo approximations of both the numerator and the denominator in~\eqref{eq:cond_exp}. The class posterior probability $p(\cs|\zs)$ is often referred to as the soft label for class $\cs$~\citep{hu2016learning,yang2023online,cui2023scaling,jeong2023demystifying,jeong2024data}. Soft labels can be obtained through knowledge distillation~\citep{gou2021knowledge}, crowdsourced label aggregation~\citep{ishida2022performance,battleday2020capturing,collins2022eliciting}, or kernel methods~\citep{tsybakov2008introduction,jeong2024data}. Leveraging soft labels has been shown to enhance robustness and improve downstream-task performance~\citep{battleday2020capturing}.

In this work, we employ the Nadaraya-Watson (NW) estimator~\citep{nadaraya1964estimating,watson1964smooth} to approximate $p(\cs|\zs) = \EE[\mathbbm{1}\{C = \cs\} | \zs]$, based on a given dataset $\Dc = \{(\xs_i,\cs_i)\}_{i=1}^{|\Dc|}$. 
This estimator serves as an intermediate step in computing $\overline{f}$ in~\eqref{eq:cond_exp}.\footnote{While the NW estimator is widely used, alternative kernel-based methods might be more suitable under specific conditions, which we leave as a topic for future exploration.}
The NW estimator for $p(\cs|\zs)$ is defined as:
\begin{align}\label{eq:NW_estimator}
	\mathsf{NW}_h(\cs; \zs, \Dc)
	& = \frac{\sum_{j=1}^{|\Dc|} K_h( d(\zs , \zs_j)) \mathbbm{1}\{\cs_j = \cs\}}{\sum_{j=1}^{|\Dc|} K_h( d(\zs, \zs_j))},
\end{align}
where $d:\RR^{d_z}\times \RR^{d_z} \to \RR_+$ is a metric, $K_h(t) = \frac{1}{h} K(\frac{t}{h})$ is a kernel $K$ with a bandwidth $h>0$ such that the kernel $K$ has support $[0,1]$, is strictly decreasing, is Lipschitz continuous, and $\exists \theta,~\forall t\in[0,1]$, $-K^\prime (t) > \theta>0$.
Using the NW estimator in~\eqref{eq:NW_estimator}, $\overline{f}(\cs)$ can be estimated as:
\begin{align}\label{eq:estimate_cond_exp}
	\hat{f}(\cs)
	& = \frac{\sum_{j=1}^N \mathsf{NW}_h( \cs ; f(\xs_j),\Dc) f(\xs_j) }{\sum_{j=1}^N \mathsf{NW}_h(\cs ; f(\xs_j), \Dc) }.
\end{align}
If soft labels $p(\cs|\zs)$ are readily available, $\mathsf{NW}_h( \cs ; f(\xs_j),\Dc)$ in~\eqref{eq:estimate_cond_exp} can be replaced with $p(\cs|\zs_j)$.

The following proposition formally states the consistency of  $\hat{f}$ in~\eqref{eq:estimate_cond_exp}, demonstrating that $\hat{f}$ with the NW estimator converges to $\overline{f}$ under some assumptions.

\begin{proposition}\label{prop:f_hat_converge}
Assume H\"older condition: there exist $C<\infty$ and $\beta>0$, such that for all $(\us,\vs)\in \Sc^{d_z-1}$, $|\mathsf{NW}_h(\cs| \uv;\Dc) - \mathsf{NW}_h(\cs| \vv;\Dc) | \leq C d(\uv,\vv)$, where $d$ is the metric used in the kernel. Assume that there exists $\kappa >0$ such that $\inf_{\zs\in \Sc^{d_z-1}} p_\zs(\zs) \geq \kappa,$ where $p_\zs = \frac{d F_\zs}{d \mu}$ is the Radon-Nikodym derivative of the distribution function $F_\zs$ with respect to the Lebesgue measure $\mu$ on $\Sc^{d_z-1}$.
Then, as $N\to\infty$ and $h\to 0$ with $\sqrt{\frac{\ln N}{h^{d_z} N}} \to 0$, it follows that
\begin{align}
	\hat{f}(\cs) \to \overline{f}(\cs).
\end{align}
\end{proposition}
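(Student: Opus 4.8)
The plan is to treat $\hat f(\cs)$ as a ratio of two empirical averages and to show that its numerator and denominator each converge almost surely to their population counterparts in the Bayes-rule expression~\eqref{eq:cond_exp}, after which the claim follows by the continuous mapping theorem. Writing $\zs_j = f(\xs_j)$ and dividing the numerator and denominator of~\eqref{eq:estimate_cond_exp} by $N$, I set
\begin{align}
	A_N := \frac{1}{N}\sum_{j=1}^N \mathsf{NW}_h(\cs;\zs_j,\Dc)\,\zs_j, \qquad B_N := \frac{1}{N}\sum_{j=1}^N \mathsf{NW}_h(\cs;\zs_j,\Dc),
\end{align}
so that $\hat f(\cs) = A_N/B_N$, and let $a := \EE_{p(\zs)}[p(\cs|\zs)\,\zs]$ and $b := \EE_{p(\zs)}[p(\cs|\zs)] = p(\cs)$, so that $\overline f(\cs) = a/b$ by~\eqref{eq:cond_exp}. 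It then suffices to prove $A_N \to a$ and $B_N \to b$ almost surely together with $b>0$.

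The central step is the uniform consistency of the Nadaraya--Watson estimator,
\begin{align}
	\sup_{\zs\in\Sc^{d_z-1}} \left| \mathsf{NW}_h(\cs;\zs,\Dc) - p(\cs|\zs) \right| \overset{a.s.}{\to} 0,
\end{align}
which I would establish by the usual bias--variance split. The density lower bound $\inf_\zs p_\zs(\zs)\ge\kappa$ guarantees that the kernel-weighted denominator in~\eqref{eq:NW_estimator} stays bounded away from zero with high probability, so the estimator is well defined and stable. The bias $\EE[\mathsf{NW}_h(\cs;\zs,\Dc)] - p(\cs|\zs)$ is a kernel-weighted average of the local variation of the regression function $p(\cs|\cdot)$; using that $K$ has support $[0,1]$ and is strictly decreasing and Lipschitz, this bias is of order $h$ and vanishes as $h\to0$. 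For the stochastic fluctuation I would fix a finite $\epsilon$-net of the compact sphere $\Sc^{d_z-1}$, apply a Bernstein-type concentration inequality to $\mathsf{NW}_h(\cs;\zs,\Dc)$ at each net point, and take a union bound; since the net cardinality grows only polynomially, the $\ln N$ factor in the rate condition $\sqrt{\ln N/(h^{d_z}N)}\to0$ renders the bound summable, giving almost-sure convergence via Borel--Cantelli. The stated Lipschitz regularity of $\mathsf{NW}_h(\cs;\cdot,\Dc)$, together with that of $p(\cs|\cdot)$, then fills the gaps between net points by equicontinuity, upgrading pointwise to uniform convergence.

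With uniform consistency in hand, I would decompose
\begin{align}
	B_N = \frac{1}{N}\sum_{j=1}^N p(\cs|\zs_j) + \frac{1}{N}\sum_{j=1}^N \big[\mathsf{NW}_h(\cs;\zs_j,\Dc) - p(\cs|\zs_j)\big].
\end{align}
The first term converges to $\EE[p(\cs|\Zm)] = p(\cs)$ by the strong law of large numbers, while the second is bounded in absolute value by the uniform error and hence vanishes, so $B_N \to b$ almost surely. An identical argument, using that $\|\zs_j\|=1$ on the sphere so the summands stay bounded, yields $A_N \to a$. Since the marginal class probability $b=p(\cs)$ is strictly positive, the continuous mapping theorem gives $\hat f(\cs) = A_N/B_N \to a/b = \overline f(\cs)$, which completes the argument.

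The hardest part is the uniform-in-$\zs$ control of the stochastic fluctuation: a single-point concentration bound is routine, but passing to a supremum over the continuum $\Sc^{d_z-1}$ hinges on the balance between the bandwidth rate $\sqrt{\ln N/(h^{d_z}N)}\to0$ and the net cardinality, and on the Lipschitz regularity that glues the net estimates together. A secondary subtlety is the dependence between the evaluation points $\zs_j$ and the dataset $\Dc$ when they overlap, which I would handle either by a leave-one-out modification of $\mathsf{NW}_h$ or by noting that the self-inclusion bias is of order $1/(h^{d_z}N)$ and thus negligible under the same rate condition.
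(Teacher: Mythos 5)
Your proposal is correct and shares the paper's overall skeleton---uniform consistency of the Nadaraya--Watson estimator, followed by a law-of-large-numbers argument for the ratio in~\eqref{eq:estimate_cond_exp}---but it differs in how the central ingredient is obtained. The paper first recasts the categorical label as a regression model, $\mathbbm{1}\{C=\cs\} = p(\cs|\zs) + \epsilon$ with $\EE[\epsilon]=0$, and then \emph{invokes} a ready-made uniform-consistency theorem for functional nonparametric regression (Corollary~4.3 of Ferraty--Vieu), checking its seven hypotheses one by one; this is in fact why the proposition's assumptions are phrased the way they are (e.g., the H\"older condition stated on $\mathsf{NW}_h$ itself and the density lower bound $\kappa$ are lifted verbatim from that corollary, and the rate $O(h^\beta)+O(\sqrt{\ln N/(h^{d_z}N)})$ comes for free). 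You instead re-derive the uniform consistency from scratch via the standard bias--variance split, an $\epsilon$-net over the compact sphere, Bernstein concentration, a union bound, and Borel--Cantelli, with the Lipschitz/H\"older regularity gluing the net points together; this is more self-contained but leaves the concentration step as a sketch rather than a completed argument. On the back end you are actually \emph{more} careful than the paper: you make explicit the ratio decomposition $\hat f = A_N/B_N$, the positivity of the limiting denominator $b=p(\cs)$, and the continuous mapping theorem, whereas the paper compresses all of this into a single displayed convergence; and you flag the self-inclusion dependence between the evaluation points $\zs_j$ and the dataset $\Dc$ (proposing leave-one-out or a negligible-bias argument), a subtlety the paper's proof silently ignores. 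In short: same strategy, with the key lemma cited by the paper and reconstructed by you, and with your version tightening two loose ends at the cost of leaving the uniform-deviation bound unproved in detail.
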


\begin{proof}
The proof is in Appendix~\ref{app:proof_f_hat_converge}
\end{proof}

As indicated in Proposition~\ref{prop:f_hat_converge}, it is recommended to use a larger batch size, which not only enhances the estimation of $\hat{f}$ but also helps tighten the mutual information bound, along with selecting a lower value for the bandwidth.

\subsection{Other projections}

While the orthogonal projection $\bar{f}$ minimizes the $\ell_2$ projection error, it is sensitive to outliers. We therefore explore a median‐based projection: $f_{\rm med}(\cs_i) = {\rm median}(\{f(\xs_p)\}_{p\in\Pc(i)} )$.
Here, $f_{\rm med}(\cs_i)$ denotes the median of all embeddings whose class equals $\cs_i$.  In analogy to ProjNCE-perp, we call the median-based ProjNCE variant ``ProjNCE-med'' if ProjNCE employs median projection for both $g_+$ and $g_-$. 

\begin{wraptable}{r}{0.3\textwidth}
\caption{Top-1 classification accuracies of SupCon and ProjNCE-perp on ImageNet.}
\label{tab:imagenet_comp}
\vspace{-0.5em}
\begin{center}
\begin{small}
\begin{sc}
\begin{tabular}{lc}
\toprule
method   & ImageNet   \\
\midrule
SupCon    & 42.89 \\
ProjNCE-MLP   & \textbf{57.66}  \\
\bottomrule
\end{tabular}
\end{sc}
\end{small}
\end{center}
\vspace{-1em}
\end{wraptable}

Furthermore, ProjNCE-perp contains a few hyper-parameters, e.g., kernel parameters, which often burden to optimize in practice. To relax the optimization burden, we also consider MLP-based ProjNCE loss, in which ProjNCE adopts MLP for both $g_+$ and $g_-$. We denote by ``ProjNCE-MLP'' if ProjNCE utilizes MLP-based projection.\footnote{Formal definition appears in Appendix~\ref{app:aux}.}
In this paper, we utilize a linear embedding matrix for the projection in ProjNCE-MLP. Assuming the linear embedding matrix learns best possible projection, ProjNCE-MLP approximately employ the best linear projection for the class embedding. 
Table~\ref{tab:imagenet_comp} summarizes ImageNet~\citep{imagenet} accuracy for SupCon and ProjNCE-MLP. ProjNCE-MLP substantially outperforms SupCon, underscoring the importance of optimizing the projection function.

\section{Experiments}\label{sec:experiment}

In this section, we empirically evaluate the proposed loss functions relative to cross‐entropy (CE) and SupCon‐based methods. 
First, we examine the effect of different kernel functions and bandwidth parameters in NW estimator used in ProjNCE-perp. Then, we compare the mutual information between the learned embeddings and class labels under each method. Lastly, we assess classification accuracy across various datasets, including vision and audio modalities, under feature and label noise.

\subsection{Kernel and bandwidth}\label{subsec:kernel}

Proposition~\ref{prop:f_hat_converge} indicates that smaller bandwidth values generally yield more accurate estimates of $\hat{f}$ when batch sizes are large. To investigate this, we evaluate several bandwidth settings on the CIFAR-10 dataset. We employ the kernel $K(t) = 1-t^2$ a scaled Epanechnikov variant~\citep{epanechnikov1969non} often favored in one-dimensional density estimation for minimizing the mean integrated squared error under certain conditions~\citep{tsybakov2008introduction}. For the distance metric $d$, we compare the $\ell_1$ and $\ell_2$ distances, as well as the cosine dissimilarity $\frac{1}{2} - \frac{1}{2} \frac{\us\cdot \vs}{\|\us\| \|\vs\|}$.

Figure~\ref{fig:param} in Appendix~\ref{app:kernel} plots CIFAR-10 and STL10 accuracy versus bandwidth $h$ for ProjNCE-perp using $\ell_1,\ell_2$, and cosine dissimilarities. Accuracy is broadly similar across metrics—consistent with Proposition~\ref{prop:f_hat_converge}—with a single anomaly at $(h=0.7, d=\cos)$ on STL10. This pattern indicates that ProjNCE-perp needs minimal kernel-parameter tuning; testing a few values typically suffices. 
Accordingly, we perform minimal bandwidth tuning by restricting  $h\in\{0.1,0.3,0.5,0.7,0.9\}$ and using the $\ell_1$ distance for all subsequent experiments.

\begin{figure}[t]
    \centering
    \subfigure[CIFAR-10]{\includegraphics[width=0.23\textwidth]{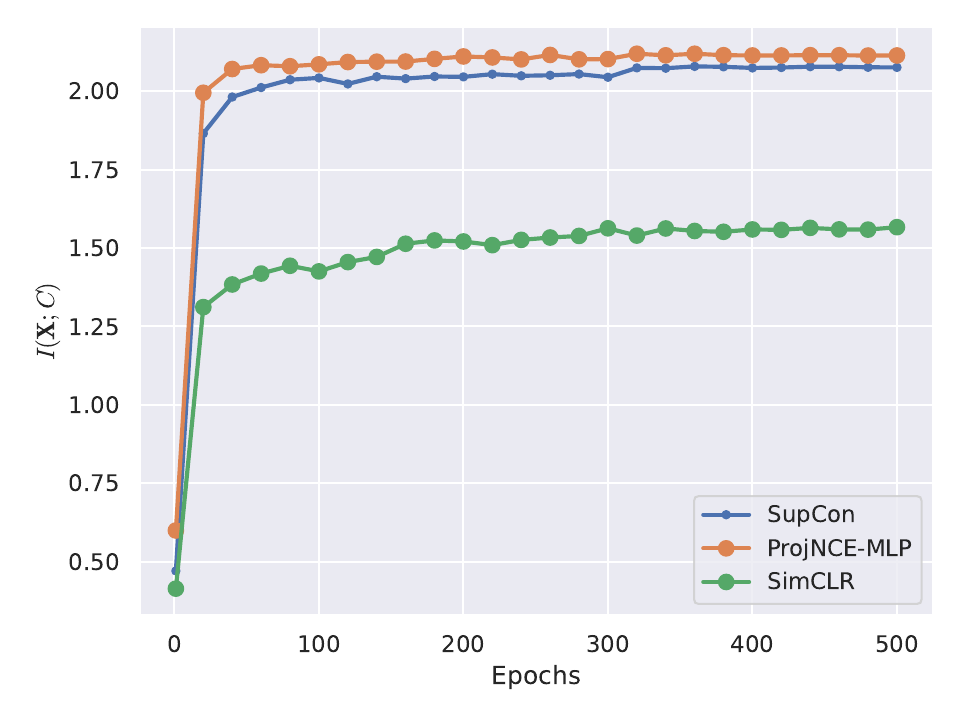} \label{subfig:mi_cifar10}}  
    \subfigure[STL10]{\includegraphics[width=0.23\textwidth]{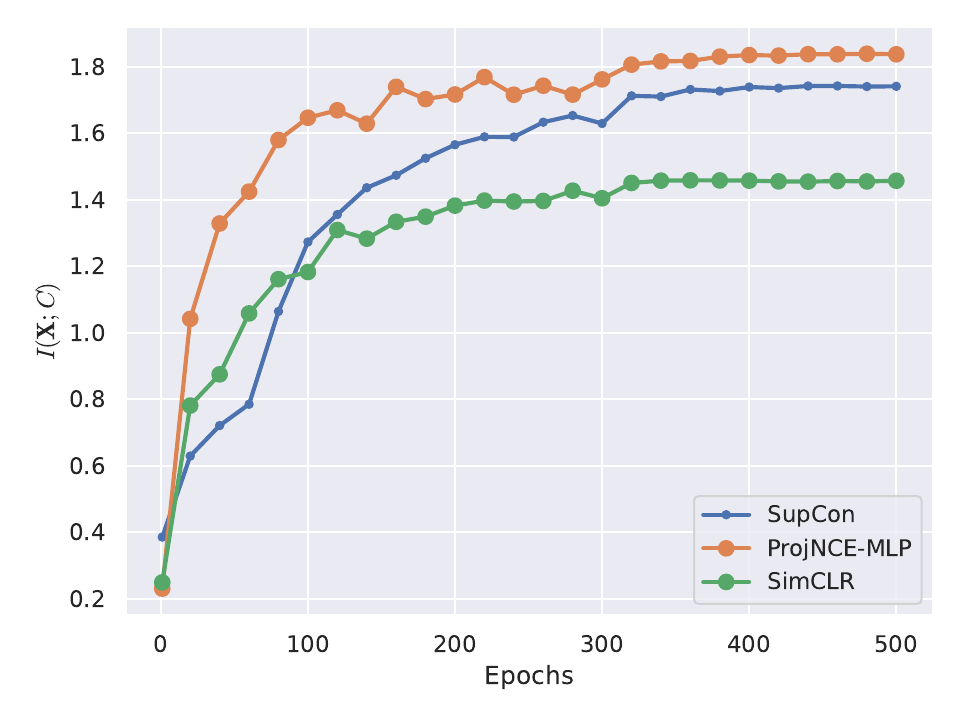}\label{subfig:mi_stl10} } 
    \subfigure[Caltech256]{\includegraphics[width=0.23\textwidth]{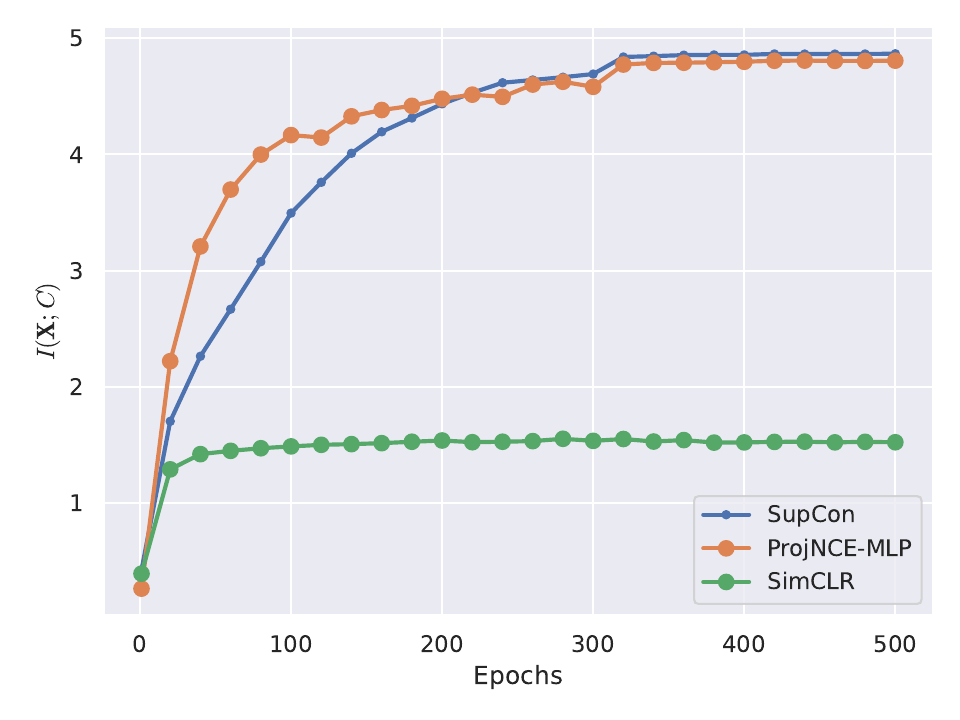}\label{subfig:mi_caltech256} } 
    \subfigure[CREMA-D]{\includegraphics[width=0.23\textwidth]{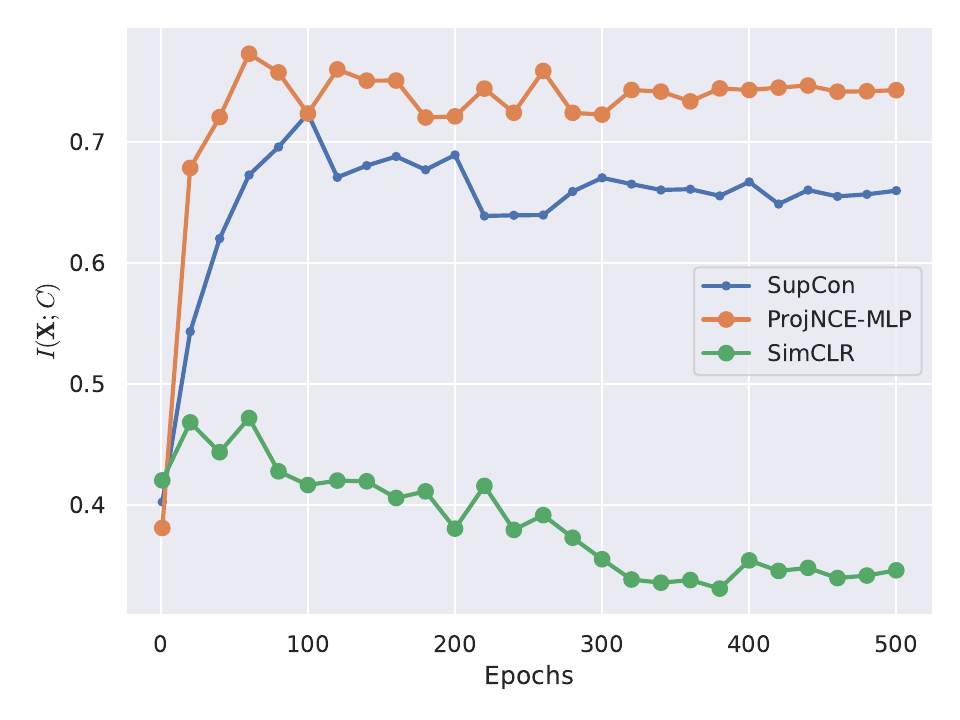}\label{subfig:mi_crema} } 
    \caption{We estimate the mutual information $I(f(\Xm);C)$ between the learned embedding and the class label. ProjNCE attains higher estimated mutual information than SupCon (a tie on Caltech256), largely due to its use of a valid mutual-information bound.}
    \label{fig:mi_comp}
\end{figure}

\subsection{Maximizing mutual information}
In Figure~\ref{fig:mi_comp}, we compare the mutual information between the learned embeddings $f(\Xm)$ and class labels $C$ for SupCon, ProjNCE-MLP, and SimCLR. Because the true data distribution is unknown, we estimate $I(f(\Xm);C)$ using the Mixed KSG estimator~\citep{gao2017estimating}, which is appropriate for continuous–discrete pairs. Across datasets, ProjNCE-MLP attains the highest estimated mutual information (with a near tie with SupCon on Caltech256), while SimCLR consistently lags behind. These results align with our information-theoretic analysis: unlike SupCon, ProjNCE optimizes a valid lower bound on mutual information (Proposition~\ref{prop:SoftNCE_property}), yielding embeddings that retain more class information.

\begin{table*}[t]
\caption{Accuracy on various image and audio datasets. We report the accuracies of SimCLR, CE, SupCon, ProjNCE-perp, and ProjNCE-MLP. }
\label{tab:classification}
\begin{center}
\begin{tiny}
\begin{sc}
\begin{tabular}{lccccccc}
\toprule
\textbf{Method} & CIFAR-10 & CIFAR-100  & Tiny-imagenet  & Caltech256 & Food101 & CREMA-D & SpeechCommands\\
\midrule
SimCLR   & 74.37   & 49.51   & 40.23    & 46.01  & 40.76 & 50.32 & 29.40 \\
CE    & 93.72  & 70.40  & 52.63  & 85.36  & 59.76  & 55.80 & \textbf{86.40} \\
SupCon  & 93.56 & 72.11    & 60.10   & 87.62 & 59.57 & 58.05 & 73.97 \\
\hdashline
ProjNCE-perp  & \textbf{94.31}  & 72.53  & \textbf{62.54}    & 87.47  & 59.78  & \textbf{60.37} & {75.52} \\
ProjNCE-MLP  & 93.69 & \textbf{72.90}  & 61.48  & \textbf{88.57}    & \textbf{60.71}  & 58.25 & 73.07 \\
\bottomrule
\end{tabular}
\end{sc}
\end{tiny}
\end{center}
\end{table*}

\begin{table*}[t]
\caption{Performance of ProjNCE and baselines on the STL10 dataset with noisy labels. We report top-1 accuracy across various noise probabilities $p\in\{0,0.1,\cdots,0.7\}$. ProjNCE achieves best performance for all settings.}
\label{tab:stl10_noise}
\begin{center}
\begin{tiny}
\begin{sc}
\begin{tabular}{lccccccccc}
\toprule
\textbf{Category} & \textbf{Method} & ${p=0}$ & ${p=0.1}$ & ${p=0.2}$ & ${p=0.3}$ & ${p=0.4}$ & ${p=0.5}$ & ${p=0.6}$ & ${p=0.7}$ \\
\midrule
\multirow{2}{*}{CE} & CE       & 74.36 & 69.45 & 65.98 & 61.00 & 56.19 & 48.14 & 40.85 & 24.78 \\
  & SCE~\citep{Wang_2019_ICCV}        & 73.38 & 70.29 & 67.34 & 61.64 & 54.48 & 49.88 & 40.69 & 26.93 \\
\hdashline
\multirow{2}{*}{SSCL} & SimCLR~\citep{chen2020simple}         & 74.40 & 70.64 & 67.60 & 62.31 & 54.93 & 50.51 & 41.93 & 35.54 \\
 & RINCE~\citep{chuang2022robust}     & 20.16  & 22.36 & 23.33 & 21.28 & 23.18 & 21.36 & 21.26 & 21.26  \\
\hdashline
\multirow{4}{*}{SupCL} & SupCon~\citep{khosla2020supervised}     & 79.80 & 75.05 & 70.46 & 63.39 & 56.32 & 47.31 & 42.86 & 26.23 \\
 & SymNCE~\citep{cui2025inclusive}        & 44.40 & 50.08 & 47.56 & 49.54 & 45.46 & 37.66 & 28.51 & 18.94 \\
 & RINCE~\citep{chuang2022robust}     & 32.16  &  31.36 & 28.40 & 24.89 & 25.45 & 25.66 & 22.80 & 27.66  \\
 & ProjNCE-med (ours)  & 81.65 & \textbf{77.13} & 73.00 & 67.26 & 64.50 & 51.24 & 29.33 & 27.80 \\
 & ProjNCE-MLP (ours)  & \textbf{81.88} & 77.03 & \textbf{73.74} & \textbf{67.73} & \textbf{65.66} & \textbf{58.18} & \textbf{52.03} & \textbf{38.83} \\
\bottomrule
\end{tabular}
\end{sc}
\end{tiny}
\end{center}
\vskip -0.1in
\end{table*}

\subsection{Classification Accuracy}\label{subset:classification}

Table 3 reports top-1 accuracy on seven benchmarks spanning images (CIFAR-10/100~\citep{krizhevsky2009learning}, Tiny-ImageNet~\citep{le2015tiny}, Caltech256~\citep{caltech256}, Food101~\citep{bossard14}) and audio (CREMA-D~\cite{crema}, SpeechCommands~\cite{warden2018speech}). Across datasets except SpeechCommands, ProjNCE attains the best result. While CE yields the best performance for SpeechCommands, ProjNCE outperforms SupCon for all datasets, which validates the effectiveness of ProjNCE over SupCon. These results are consistent with our analysis: 1) ProjNCE preserves a valid mutual-information bound, which translates into higher classification accuracy across modalities; 2) a proper projection can boost accuracy and robustness.

Table~\ref{tab:stl10_noise} presents classification accuracies for noise rates $p$ on STL-10~\citep{coates2011analysis}. ProjNCE-MLP attains the highest accuracy in nearly all settings, except at $p=0.1$, where ProjNCE-med slightly outperforms it. We attribute this to the MLP-based projection learning more robust class embeddings. In Appendix~\ref{app:exp_detail}, we provide additional results—including feature-noise experiments—again showing that ProjNCE outperforms SupCon-based baselines. As discussed in Section~\ref{sec:projection} (see also Q2), SupCon’s centroid-based label embedding is not universally optimal; thus, optimizing the projection is necessary.

Overall, these findings confirm that our proposed methods outperform SupCon in all settings, thereby establishing ProjNCE as a more robust foundation for supervised contrastive learning.

\section{Conclusion}

We introduced {ProjNCE}, a generalization of InfoNCE that subsumes {SupCon} and preserves a valid mutual-information lower bound via an adjustment term on negative pairs. By allowing flexible projection functions, {ProjNCE} delivers task-dependent improvements over rigid centroid embeddings. Extensive experiments on real-world image and audio datasets show that {ProjNCE} consistently outperforms {SupCon}. Moreover, {ProjNCE} integrates seamlessly into {SupCon}-style frameworks, providing a stronger foundation for future supervised contrastive learning methods.

A promising direction for future work is adapting {ProjNCE} to settings with auxiliary information beyond class labels. Thanks to its flexible projection design, {ProjNCE} can incorporate priors or side information by instantiating \(g_+\) and \(g_-\) to encode class taxonomies, semantic label embeddings, or cost-sensitive relations—thereby emphasizing structure that the data alone may underrepresent.

%
%


\bibliography{ref_ProjNCE}
\bibliographystyle{abbrv}

\appendix

\section{Related Work}

\subsection{Contrastive learning}
Self-supervised contrastive methods~\citep{chen2020simple,tian2020makes} learn an encoder $f:\Xc\to \Sc^{d_z-1}$  by pulling semantically similar features closer and pushing dissimilar ones apart. 
Typically, each data sample is treated as its own class, with only its augmented version representing the same class.
Various contrastive losses~\citep{oord2018representation,chen2020simple,chuang2020debiased,barbano2022unbiased,chopra2005learning,hermans2017defense} have successfully produced robust representations for applications ranging from computer vision~\citep{wang2021dense} to multimodal tasks~\citep{radford2021learning,girdhar2023imagebind}. The InfoNCE loss~\citep{oord2018representation} has been linked to mutual information~\citep{poole2019variational}, and subsequent analysis~\citep{tschannen2019mutual,wang2020understanding} has revealed how alignment–uniformity principles underlie its practical effectiveness.
By contrast, supervised contrastive (SupCon) learning~\citep{khosla2020supervised} uses label information to group same-class features as positive pairs, often outperforming cross-entropy in downstream tasks. 

\subsection{SupCon and Its Improvements}
While SSCL has been studied from an information‐theoretic perspective, SupCon has not been explored in this way. Instead, the literature has proposed several enhancements to SupCon. First, SupCon can suffer from class collapse, where embeddings of the same class converge to a single point, increasing generalization error and reducing robustness~\citep{graf2021dissecting}. To mitigate this, various methods~\citep{graf2021dissecting,islam2021broad,chen2022perfectly} introduce regularization or diversification strategies that preserve fine‐grained information while maintaining strong discriminative performance.

Second, although robustness to label noise has been extensively studied for SSCL and cross‐entropy approaches~\citep{Wang_2019_ICCV,cui2023rethinking,xue2022investigating,chuang2022robust}, only a few robust SupCon variants exist. Chuang et al.~\citep{chuang2022robust} address noisy learning with RINCE, a drop‐in replacement for InfoNCE that mitigates view noise. They link RINCE to symmetric losses used in noisy‐label classification~\citep{Wang_2019_ICCV} and derive a mutual‐information bound under the Wasserstein distance. Cui et al.~\citep{cui2025inclusive} introduce an inclusive framework for robust contrastive techniques—such as nearest‐neighbor sampling and RINCE—and propose SymNCE, a noise‐robust adaptation of SupCon grounded in this theory.

\section{Proofs.}

\subsection{Proof of Proposition~\ref{prop:ProjNCE_bound}}\label{app:proof_ProjNCE_bound}
\begin{proof}
For any $\psi$ and $a\geq0$, the multi-sample version of $I_{\rm NWJ}(X;Y)$~\citep{poole2019variational} is given by
\begin{align}\label{eq:MI_NWJ}
	I(\Xm; C)
	& \geq I_{\rm NWJ}(\Xm;C) \nonumber \\
	& = 1 + \EE_{p(\xs|\cs_i) \prod_{j=1}^N p(\cs_j) } \left[ \log \frac{e^{\psi(\xs,\cs_i)} }{a} \right]  - \EE_{p(\xs)\prod_{j=1}^N p(\cs_j) } \left[ \frac{e^{\psi(\xs,\cs_i)}}{a} \right].
\end{align} 
Averaging $I_{\rm NWJ}(\Xm;C) $ over $i\in[N]$, we have
\begin{align}
	I_{\rm NWJ}(\Xm;C) 
	& = 1 + \frac{1}{N} \sum_{i=1}^N  \EE_{p(\xs|\cs_i) \prod_{j=1}^N p(\cs_j) } \left[ \log \frac{e^{\psi(\xs,\cs_i)} }{a} \right] \nonumber \\
	& \qquad - \frac{1}{N} \sum_{i=1}^N  \EE_{p(\xs) \prod_{j=1}^N p(\cs_j)} \left[ \frac{e^{\psi(\xs,\cs_i)}}{a} \right].
\end{align}
Now, rewriting $\psi(\xs,\cs_i) = \psi(f(\xs), g_+(\cs_i))$ and setting $a = \frac{1}{N} \sum_{j=1}^N e^{\psi(f(\xs), g_-(\cs_j))}$, we obtain
\begin{align}
	I_{\rm NWJ}(\Xm;C) 
	& = 1 + \frac{1}{N} \sum_{i=1}^N  \EE_{p(\xs|\cs_i) \prod_{j=1}^N p(\cs_j) } \left[ \log \frac{e^{\psi(f(\xs), g_+(\cs_i)) } }{\frac{1}{N} \sum_{j=1}^N e^{\psi(f(\xs), g_-(\cs_j))}} \right] \nonumber \\ 
	& \qquad - \frac{1}{N} \sum_{i=1}^N  \EE_{p(\xs) \prod_{j=1}^N p(\cs_j)} \left[ \frac{e^{\psi(f(\xs), g_+(\cs_i))}}{ \frac{1}{N} \sum_{j=1}^N e^{\psi(f(\xs), g_-(\cs_j))}} \right] \nonumber \\
	& = 1 + \log N - I_{\rm NCE}^{\rm self\text{-}p}(\Xm; C) - \EE_{p(\xs) \prod_{j=1}^N p(\cs_j)} \left[ \frac{\sum_{i=1}^N  e^{\psi(f(\xs), g_+(\cs_i))}}{\sum_{j=1}^N e^{\psi(f(\xs), g_-(\cs_j))}} \right],
\end{align}
which concludes the proof of Proposition~\ref{prop:ProjNCE_bound}.
\end{proof}

\subsection{Proof of Proposition~\ref{prop:SoftNCE_property}}\label{app:proof_SoftNCE_property}

\begin{proof}
Proof of optimal critic of ProjNCE-perp leverages the optimal critic~\citep{ma-collins-2018-noise} of InfoNCE loss. Since ProjNCE-perp only differs in intermediate projections in the critic of InfoNCE, which does not affect the optimality, the optimal critic satisfies that
\begin{align}
	\psi(f(\xs), \overline{f}(\cs))^\star \propto \log \frac{p(\cs|\xs)}{p(\cs)} + \alpha(\xs).
\end{align}

Proof of the lower bound $I(\Xm; C) \geq I_{\rm proj}^{\rm perp}(\Xm;C) + \log N$. Similar to the proof of Proposition~\ref{prop:ProjNCE_bound}, we start from the multi-sample version of $I_{\rm NWJ}(X;Y)$~\citep{poole2019variational}:
\begin{align}\label{eq:softNCE_bound_proof1}
	I(\Xm; C)
	& \geq I_{\rm NWJ}(\Xm;C) \nonumber \\
	& = 1 + \EE_{p(\xs|\cs_i) \prod_{j=1}^N p(\cs_j) } \left[ \log \frac{e^{\psi(\xs,\cs_i)} }{a} \right]  - \EE_{p(\xs)\prod_{j=1}^N p(\cs_j) } \left[ \frac{e^{\psi(\xs,\cs_i)}}{a} \right].
\end{align} 
Averaging $I_{\rm NWJ}(\Xm;C) $ over $i\in[N]$, rewriting $\psi(\xs,\cs_i) = \psi(f(\xs), \overline{f}(\cs_i))$ and setting $a = \frac{1}{N} \sum_{j=1}^N e^{\psi(f(\xs), \overline{f}(\cs_j))}$, we obtain
\begin{align}\label{eq:softNCE_bound_proof2}
	I_{\rm NWJ}(\Xm;C) 
	& = 1 + \frac{1}{N} \sum_{i=1}^N  \EE_{p(\xs|\cs_i) \prod_{j=1}^N p(\cs_j) } \left[ \log \frac{e^{\psi(f(\xs), \overline{f}(\cs_i)) } }{\frac{1}{N} \sum_{j=1}^N e^{\psi(f(\xs), \overline{f}(\cs_j))}} \right] \nonumber \\ 
	& \qquad - \frac{1}{N} \sum_{i=1}^N  \EE_{p(\xs) \prod_{j=1}^N p(\cs_j)} \left[ \frac{e^{\psi(f(\xs), \overline{f}(\cs_i))}}{\frac{1}{N}\sum_{j=1}^N e^{\psi(f(\xs), \overline{f}(\cs_j))}} \right] \nonumber \\
	& = \frac{1}{N} \sum_{i=1}^N  \EE_{p(\xs|\cs_i) \prod_{j=1}^N p(\cs_j) } \left[ \log \frac{e^{\psi(f(\xs), \overline{f}(\cs_i)) } }{\frac{1}{N} \sum_{j=1}^N e^{\psi(f(\xs), \overline{f}(\cs_j))}} \right] \nonumber \\
	& = \frac{1}{N} \sum_{i=1}^N  \EE_{p(\xs|\cs_i) \prod_{j=1}^N p(\cs_j) } \left[ \log \frac{e^{\psi(f(\xs), \overline{f}(\cs_i)) } }{\sum_{j=1}^N e^{\psi(f(\xs), \overline{f}(\cs_j))}} \right] + \log N \nonumber \\ 
	& = -  I_{\rm proj}^{\rm perp}(\Xm;C)  + \log N.
\end{align}
Substituting~\eqref{eq:softNCE_bound_proof2} into~\eqref{eq:softNCE_bound_proof1} gives
\begin{align}
	I(\Xm;C)
	& \geq  - I_{\rm proj}^{\rm perp}(\Xm;C)  + \log N.
\end{align}

With an optimal critic $\psi(f(\xs), \overline{f}(\cs))^\star = \log \frac{p(\cs|\xs)}{p(\cs)} $, $i$-th ProjNCE-perp for $\xs_i$ can be written as
\begin{align}
	I_{\rm proj,i}^{\rm perp}(\Xm; C) - \log N
	& = - \EE_{\substack{p(\xs_i|\cs_i)\prod_{j=1}^N p(\cs_j)} } \left[ \log \frac{  \frac{p(\cs_i|\xs_i)}{p(\cs_i)} }{ \frac{1}{N} \sum_{j=1}^N  \frac{p(\cs_j|\xs_i)}{p(\cs_j)} } \right].
\end{align}
Taking $N\to\infty$, due to the strong law of large numbers, it follows that
\begin{align}
	\log N - I_{\rm proj,i}^{\rm perp}(\Xm; C) 
	& \overset{N\to\infty}{=} \EE_{p(\xs_i,\cs_i)  } \left[ \log \frac{  \frac{p(\cs_i|\xs_i)}{p(\cs_i)} }{ \EE_{p(\cs)} \left[\frac{p(\cs|\xs_i)}{p(\cs)} \right] } \right] \nonumber \\
	& = \EE_{p(\xs_i,\cs_i) } \left[ \log  \frac{p(\cs_i|\xs_i)}{p(\cs_i)} \right] \nonumber \\
	& = I(\Xm;C).
\end{align}
This concludes the proof of Proposition~\ref{prop:SoftNCE_property}.
\end{proof}

\subsection{Proof of Proposition~\ref{prop:f_hat_converge}}\label{app:proof_f_hat_converge}

\begin{proof}
According to~\citep{ishida2022performance,jeong2023demystifying}, the categorical labels can be decomposed into soft label (i.e., posterior probability $p(\cs|\xs)$) and some noise having zero-mean. Specifically, we can write that
\begin{align}
	\mathbbm{1}\{C = \cs\}
	& = p(\cs|\zs) + \epsilon,
\end{align}
where $\EE[\epsilon] = 0$. 
This model allows us to leverage results of non-parametric kernel estimator:
\begin{lemma}[Corollary 4.3,~\citep{ferraty2004nonparametric}]\label{lem:nw_consistency}
Assume that:
\begin{enumerate}
	\item $\Dc$ consists of independent samples. 
	\item Kernel $K$ has support $[0,1]$, is strictly decreasing, and is Lipschitz continuous;
	\item $\exists \theta,~\forall t\in[0,1]$, $-K^\prime (t) > \theta>0$;
	\item H\"older condition: there exist $C<\infty$ and $\beta>0$, such that for all $(\us,\vs)\in \Sc^{d_z-1}$, $|\mathsf{NW}_h(\cs| \uv;\Dc) - \mathsf{NW}_h(\cs| \vv;\Dc) | \leq C d(\uv,\vv)$, where $d$ is the metric used in the kernel;
	\item $\exists p \geq 2$, $\EE[p(\cs|\zs)]^p < \infty$;
	\item $\sup_{\uv, \vv} \EE[|p(\cs|\uv) p(\cs|\vv)| \mid \uv,\vv ] < \infty$;
	\item There exists $\kappa >0$ such that $\inf_{\zs\in \Sc^{d_z-1}} p_\zs(\zs) \geq \kappa,$ where $p_\zs = \frac{d F_\zs}{d \mu}$ is the Radon-Nikodym derivative of the distribution function $F_\zs$ with respect to the Lebesgue measure $\mu$ on $\Sc^{d_z-1}$.
\end{enumerate}

Then, it holds that
\begin{align}
	\sup_{\zs\in \Sc^{d_z-1}} | p(\cs|\zs) - \mathsf{NW}_h(\cs|\zs, \Dc) |
	& = O(h^\beta) + O\left( \sqrt{\frac{\ln N}{h^{d_z} N }} \right), a.s.
\end{align}
\end{lemma}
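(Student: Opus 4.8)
The plan is to follow the classical bias--variance decomposition for kernel regression, made uniform over $\zs$ by a covering argument and upgraded to almost-sure convergence via Borel--Cantelli; this is essentially the route of the Ferraty--Vieu framework cited. I write the estimator as a ratio $\mathsf{NW}_h(\cs|\zs,\Dc)=\hat r(\zs)/\hat p(\zs)$, with $\hat r(\zs)=\frac{1}{N}\sum_{j}K_h(d(\zs,\zs_j))\mathbbm{1}\{\cs_j=\cs\}$ and $\hat p(\zs)=\frac{1}{N}\sum_{j}K_h(d(\zs,\zs_j))$. Since these estimate, up to a common kernel normalization, $r(\zs)=p(\cs|\zs)\,p_\zs(\zs)$ and $p_\zs(\zs)$, the identity
\[
\mathsf{NW}_h(\cs|\zs,\Dc)-p(\cs|\zs)=\frac{\bigl(\hat r(\zs)-\EE\hat r(\zs)\bigr)-p(\cs|\zs)\bigl(\hat p(\zs)-\EE\hat p(\zs)\bigr)+\bigl(\EE\hat r(\zs)-p(\cs|\zs)\,\EE\hat p(\zs)\bigr)}{\hat p(\zs)}
\]
separates a deterministic bias term (the last numerator bracket) from two centered stochastic fluctuations.

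For the bias, I would exploit the compact support $[0,1]$ of $K$: only data with $d(\zs,\zs_j)\le h$ contribute, and on that ball $p(\cs|\cdot)$ varies by at most $O(h^\beta)$ by the H\"older regularity posited in the hypotheses. Writing the bias numerator as $\EE_{Z}\!\left[K(d(\zs,Z)/h)\,(p(\cs|Z)-p(\cs|\zs))\right]$ and dividing by the denominator's order therefore yields the $O(h^\beta)$ contribution uniformly in $\zs$, where the lower density bound $\inf_\zs p_\zs(\zs)\ge\kappa>0$ keeps $\EE\hat p(\zs)$ bounded away from zero.

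The heart of the argument is the uniform control of the stochastic terms. Because the $1/h$ normalization of $K_h$ cancels between numerator and denominator, I may work with the unnormalized summands $K(d(\zs,\zs_j)/h)$, which are bounded by $\|K\|_\infty$ and concentrate on the small ball of radius $h$; with $\phi(h)=\PP(d(\zs,Z)\le h)$ the small-ball probability, both $\EE\hat r$ and $\EE\hat p$ are of order $\phi(h)$, and $\inf_\zs p_\zs\ge\kappa$ guarantees that the effective local sample mass $N\phi(h)$ does not decay faster than the $Nh^{d_z}$ order recorded in the rate. I would cover the compact manifold $\Sc^{d_z-1}$ by a $\delta$-net of cardinality polynomial in $1/\delta$; at each net point a Bernstein inequality applied to the centered sums, after dividing by the denominator's order $\phi(h)$, gives a relative deviation of order $\sqrt{\ln(\text{net size})/(N\phi(h))}$. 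The Lipschitz continuity of $K$ transfers the bound from each net point to nearby $\zs$, and choosing $\delta$ polynomially small in $N$ keeps the discretization error negligible while keeping $\ln(\text{net size})$ of order $\ln N$, which is exactly where the $\ln N$ factor originates. Summing the resulting exponential tail probabilities over $N$, which is summable precisely when $\ln N/(Nh^{d_z})\to 0$, and invoking Borel--Cantelli upgrades the bound to almost sure; the same control applied to $\hat p$ shows $\inf_\zs\hat p(\zs)$ stays bounded below eventually a.s., so the denominator is uniformly nonvanishing and the ratio inherits the sum of the two rates.

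The main obstacle is precisely this uniform-in-$\zs$ stochastic control: pointwise concentration is routine, but converting it into a supremum bound over the continuum $\Sc^{d_z-1}$ requires the covering-number plus Lipschitz-kernel chaining together with a Borel--Cantelli step whose summability condition coincides with the stated bandwidth constraint. The density lower bound $\kappa$ is what makes the uniform control of the denominator---and hence of the ratio---possible, while the H\"older exponent $\beta$ and the kernel's compact support govern the complementary bias term.
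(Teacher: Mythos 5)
You should first note what you are comparing against: the paper does not prove this lemma at all. It is imported verbatim as Corollary~4.3 of the cited Ferraty--Vieu reference and used as a black box inside the proof of Proposition~3.3, where the authors merely verify that its hypotheses~1--3, 5, 6 hold and assume~4 and~7. Your sketch is therefore not an alternative to an internal argument but a reconstruction of the cited result, and it is a faithful one: the ratio decomposition $\mathsf{NW}_h = \hat r/\hat p$ with the bias isolated in $\EE\hat r - p(\cs|\cdot)\,\EE\hat p$, the $O(h^\beta)$ bias bound from the kernel's compact support plus H\"older regularity and the density floor $\kappa$, the covering-net-plus-Bernstein control of the centered terms (with $\ln N$ coming from the polynomial covering number of the compact set and the kernel's Lipschitz continuity handling the chaining step), the Borel--Cantelli upgrade to almost-sure rates, and the eventual a.s.\ lower bound on $\hat p$ keeping the ratio well defined --- these are exactly the ingredients of the Ferraty--Vieu proof, where in the general functional setting the covering number is replaced by Kolmogorov entropy and the small-ball probability $\phi(h)$ plays the role you gave it.

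Three caveats, the first two inherited from the statement rather than introduced by you. First, condition~4 as printed imposes the H\"older bound on the estimator $\mathsf{NW}_h$ itself, which would make your bias step circular if read literally; the intended hypothesis is H\"older continuity of the target $\zs\mapsto p(\cs|\zs)$, with right-hand side $C\,d(\uv,\vv)^\beta$ (as printed, $\beta$ never enters the hypotheses), and your argument silently and correctly uses that version --- worth stating explicitly. Second, on the $(d_z-1)$-dimensional sphere the small-ball probability scales as $h^{d_z-1}$, so the variance term should read $\sqrt{\ln N/(h^{d_z-1}N)}$; you matched the paper's $h^{d_z}$, a dimension bookkeeping slip in the statement that is harmless for the qualitative conclusion of Proposition~3.3. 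Third, your phrase that the tail probabilities are ``summable precisely when $\ln N/(Nh^{d_z})\to 0$'' conflates two things: summability is obtained by taking the deviation level to be a sufficiently large constant multiple of $\sqrt{\ln N/(N\phi(h))}$ so that each tail is $O(N^{-2})$, while the stated bandwidth condition is what makes the resulting rate vanish. The mechanics you describe are otherwise correct.
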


The assumption 1 holds as $\Dc$ consists of i.i.d. samples. The NW estimator in~\eqref{eq:NW_estimator} uses a kernel satisfying the assumption 2 and 3 in Lemme~\ref{lem:nw_consistency}. The assumption 5 is true since $\EE[p(\cs|\zs)]^p \leq 1 <\infty$. The assumption 6 also holds because $\sup_{\uv, \vv} \EE[|p(\cs|\uv) p(\cs|\vv)| \mid \uv,\vv ]  \leq 1 < \infty$. Now, let us assume that the assumptions 4 and 7 are satisfied.
Then, as $N\to\infty$ and $h\to 0$ with $\sqrt{\frac{\ln N}{h^{d_z} N}} \to 0$, it follows that
\begin{align}\label{eq:nw_converge_proof}
	\mathsf{NW}_h(\cs|\zs, \Dc) 
	& \overset{a.s.}{\to} p(\cs|\zs).
\end{align}
With~\eqref{eq:nw_converge_proof} and the law of large numbers, $\hat{f}$ in~\eqref{eq:estimate_cond_exp} converges as
\begin{align}
	\hat{f}(\cs)
	& \to \frac{ \EE_{p(\xs)} [ p(\cs|f(\xs)) f(\xs)] }{ \EE_{p(\xs)} [ p(\cs|f(\xs))]  } \nonumber \\
	& = \overline{f}(\cs).
\end{align}
This concludes the proof of Proposition~\ref{prop:f_hat_converge}
\end{proof}

\section{Definition}\label{app:aux}

\begin{definition}[ProjNCE-perp]
ProjNCE-perp loss is defined as
\begin{align}
	I_{\rm proj}^{\rm perp}
	& = \frac{1}{N} \sum_{i=1}^N \EE_{P } \left[ -\log \frac{ e^{ \psi( f(\xs_i), \overline{f}(\cs_i))}}{ \sum_{j=1}^N e^{ \psi( f(\xs_i) , \overline{f}(\cs_j))} } \right],
\end{align}
where $\overline{f}(\cs) =  \EE[f(\Xm) | C = \cs]$.
\end{definition}


\begin{definition}[ProjNCE-med]\label{def:MedNCE}
ProjNCE-med loss is defined as
\begin{align}
	I_{\rm proj}^{\rm med}
	& = \frac{1}{N} \sum_{i=1}^N \EE_{P } \left[ -\log \frac{ e^{ \psi( f(\xs_i), f_{\rm med}(\cs_i))}}{ \sum_{j=1}^N e^{ \psi( f(\xs_i) , f_{\rm med}(\cs_j))} } \right],
\end{align}
where $f_{\rm med}(\cs_i) = {\rm median}(\{f(\xs_p)\}_{p\in\Pc(i)} )$.
\end{definition}


\begin{definition}[ProjNCE-MLP]\label{def:ProjNCE-MLP}
Let $\Fc:\Cc\to\Zc$ be an arbitrary MLP model.
Then, ProjNCE-MLP loss is defined as
\begin{align}\label{eq:ProjNCE-MLP}
	I_{\rm proj}^{\rm MLP}
	& = \frac{1}{N} \sum_{i=1}^N \EE_{P } \left[ -\log \frac{ e^{ \psi( f(\xs_i), \Fc(\cs_i))}}{ \sum_{j=1}^N e^{ \psi( f(\xs_i) , \Fc(\cs_j))} } \right].
\end{align}
\end{definition}


\section{Experiments}\label{app:exp_detail}
We evaluate ProjNCE with several projection choices against standard baselines. For images, we use a ResNet-18 encoder~\citep{he2016deep}; for audio, we use a 3-layer 2-D CNN. Both encoders produce 128-dimensional embeddings. For ProjNCE-perp, we employ a learnable matrix of size $128\times|\Cc|$ that linearly maps each class label into the embedding space. For audio inputs, raw waveforms are converted to Mel spectrograms of size $(1,128,204)$ or $(1,128,32)$, depending on the dataset. We train encoders with AdamW~\citep{loshchilov2017decoupled} using batch size $256$, a total of $500$ epochs ($100$ for ImageNet~\citep{imagenet} and $150$ for SpeechCommands~\citep{warden2018speech}), learning rate $0.01$, weight decay $10^{-4}$, and temperature $0.07$ for all contrastive losses. Data augmentation and contrastive-learning settings follow \citet{khosla2020supervised}, which builds on SimCLR~\citep{chen2020simple}.

\paragraph{Classification for the general case.}
To obtain accuracy from the learned embeddings, we use a zero-shot classifier. For each class, we compute a class embedding in the shared space; a test sample is assigned the class with the highest cosine similarity between its class embedding and the sample’s embedding. We evaluate zero-shot accuracy at every training epoch and report the best epoch for noiseless experiments.

\paragraph{Classification with noisy samples.}
With noisy labels or noisy features, accuracy fluctuates substantially in early epochs, making robust comparisons difficult. Accordingly, for noisy settings we report accuracy at the final epoch, after performance stabilizes (typically within a few hundred epochs). 
Optimization hyperparameters match those above, except we train the probe for $100$ epochs.

\subsection{Kernel and bandwidth}\label{app:kernel}
Figure~\ref{fig:param} plots CIFAR-10 and STL10 accuracy versus bandwidth $h$ for ProjNCE-perp using $\ell_1,\ell_2$, and cosine dissimilarities. Accuracy is broadly similar across metrics—consistent with Proposition~\ref{prop:f_hat_converge}—with a single anomaly at $(h=0.7, d=\cos)$ on STL10. This pattern indicates that ProjNCE-perp needs minimal kernel-parameter tuning; testing a few values typically suffices. 

\begin{figure}[t]
    \centering
    \subfigure[CIFAR-10]{\includegraphics[width=0.43\textwidth]{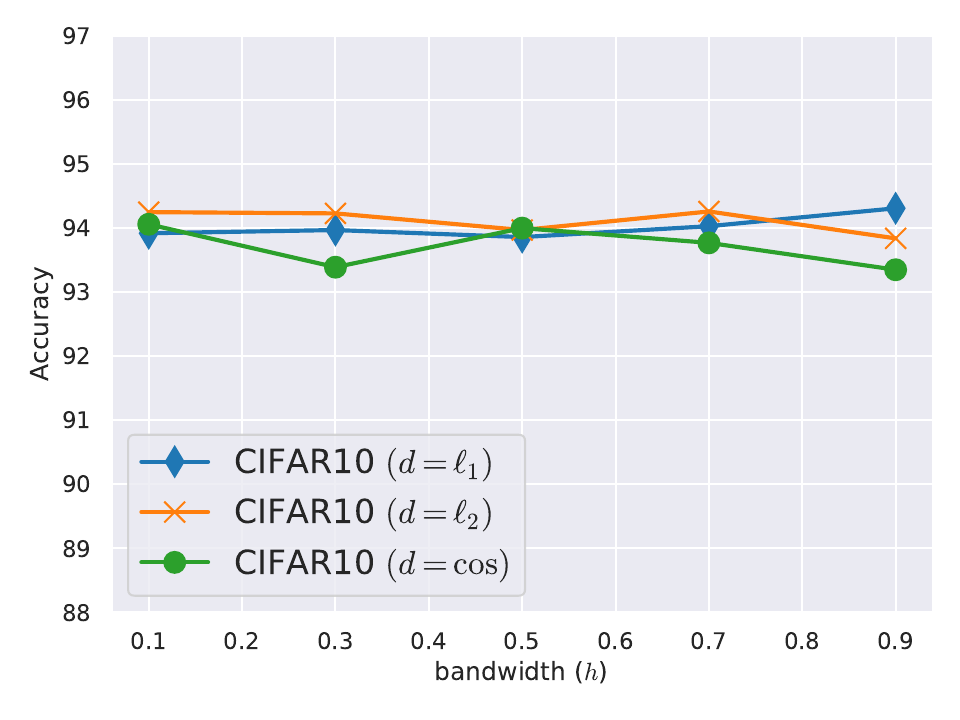} \label{subfig:param_cifar10}}  
    \subfigure[STL10]{\includegraphics[width=0.43\textwidth]{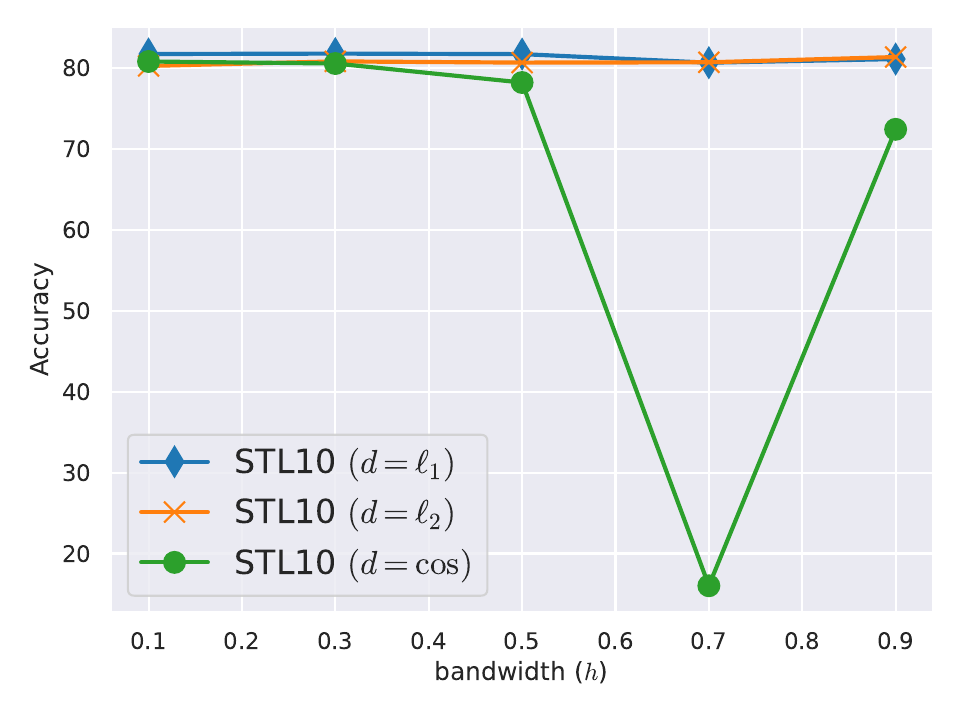}\label{subfig:param_stl10} } 
    \caption{Effect of bandwidth $h$. Results are shown for $\ell_1,\ell_2$, and cosine $(\cos)$ dissimilarities. With the exception of $(d=cos, h=0.7)$ on STL10, ProjNCE-perp is largely insensitive to kernel parameters. The sharp dip at $(d=\cos, h=0.7)$—with accuracy $<20\%$—likely reflects an outlier or convergence to a poor local optimum. Overall, ProjNCE-perp requires minimal hyperparameter tuning.
    }
    \label{fig:param}
\end{figure}

\subsection{Classification with Noisy Features}

\begin{table*}[t]
\caption{Performance on the CIFAR-10 dataset with noisy features. Each image (with pixel value from $0$ to $255$) is corrupted by adding Gaussian noise with zero mean and $70$ standard deviation. We report top-1 accuracy. \textbf{Boldface} indicates the best accuracy for each category.}
\label{tab:cifar10_noise_feature}
\begin{center}
\begin{small}
\begin{sc}
\begin{tabular}{lccccccc}
\toprule
\textbf{Category} & \textbf{Method}  & CIFAR-10 with noisy image  \\
\midrule
\multirow{2}{*}{CE} & CE       & 57.86 \\
  & SCE~\citep{Wang_2019_ICCV}         & \textbf{63.59} \\
\hdashline
\multirow{2}{*}{SSCL} & SimCLR~\citep{chen2020simple}        & \textbf{46.49} \\
 & RINCE (SSCL)~\citep{chuang2022robust}           & 39.46 \\
\hdashline
\multirow{7}{*}{SupCL} & SupCon~\citep{khosla2020supervised}     & 60.04 \\
 & SymNCE ~\citep{cui2025inclusive}       & 60.24  \\
 & RINCE (SupCL)~\citep{chuang2022robust}        & 52.60 \\
 & ProjNCE-perp (Ours)     & 61.55  \\
 & ProjNCE-med (Ours)    & \textbf{62.48}   \\
  & ProjNCE-MLP (Ours)    & 60.03   \\
 \hdashline
 \multirow{2}{*}{Hybrid} & JointTraining~\citep{cui2023rethinking} with SupCon     & \textbf{56.25}  \\
 & JointTraining~\citep{cui2023rethinking} with ProjNCE    & 56.07  \\
\bottomrule
\end{tabular}
\end{sc}
\end{small}
\end{center}
\vskip -0.1in
\end{table*}

In addition to the noisy‐label experiments, we assess robustness to noisy features by corrupting the CIFAR-10 dataset with additive Gaussian noise (zero mean, standard deviation 70) applied to each pixel value in the range 0–255.

Table~\ref{tab:cifar10_noise_feature} reports top-1 accuracy for each method, including cross‐entropy (CE)-based robust baselines (SCE~\citep{Wang_2019_ICCV}), SupCon-based robust baselines (both self-supervised and supervised version of RINCE~\citep{chuang2022robust}, SymNCE~\citep{cui2025inclusive}), and JointTraining~\citep{cui2023rethinking}. Under noisy features, robust CE‐based method, SCE, achieves the highest accuracy mainly due to a mild usage of noisy feature in CE method, while contrastive learning mainly exploits features that are perturbed for classification. Among SupCon baselines, ProjNCE-med attains the best performance, surpassing both SymNCE and RINCE.

\end{document}